\documentclass[lettersize,journal]{IEEEtran}

\ifCLASSOPTIONcompsoc
  \usepackage[nocompress]{cite}
\else
  \usepackage{cite}
\fi
\usepackage{amsmath,amsfonts}
\usepackage{algorithmic}
\usepackage{algorithm}
\usepackage{array}
\usepackage{textcomp}
\usepackage{stfloats}
\usepackage{hyperref}
\usepackage{url}
\usepackage{verbatim}
\usepackage{graphicx}
\usepackage{cite}
\usepackage{booktabs}
\usepackage{multirow}
\usepackage{bbding}
\usepackage{subfigure}

\newcommand{\red}[1]{{\color{red}{#1}}}

\newcommand{\green}[1]{{\color{green}{#1}}}
\usepackage{color}
\usepackage{xcolor}
\usepackage{amsthm}
\definecolor{gre}{RGB}{144,186,179}
\newtheorem{prop}{Proposition}
\begin{document}

\title{SIEFormer: Spectral-Interpretable and -Enhanced Transformer for Generalized Category Discovery}

\author{Chunming Li, Shidong Wang, Tong Xin, and Haofeng Zhang 
\thanks{This work was partly supported by the National Natural Science Foundation of China (NSFC) under Grant Nos. 62371235 and U25A20444,  partly by the Key Research and Development Plan of Jiangsu Province under Grant BE2023008-2. (Corresponding author: Haofeng Zhang.)}
\thanks{Chunming Li and Haofeng Zhang are with the School of Computer Science and Engineering, Nanjing University of Science and Technology, Nanjing, 210094, China. E-mail:\{chunmingli, zhanghf\}@njust.edu.cn}
\thanks{Shidong Wang is with the School of Engineering, Newcastle University, Newcastle upon Tyne, NE17RU, United Kingdom. (e-mail: shidong.wang@newcastle.ac.uk)}
\thanks{Tong Xin is with the School of Computing, Newcastle University, Newcastle upon Tyne, NE17RU, United Kingdom. (e-mail: tong.xin@newcastle.ac.uk)}
}


\maketitle

\begin{abstract}
This paper presents a novel approach, Spectral-Interpretable and -Enhanced Transformer (SIEFormer), which leverages spectral analysis to reinterpret the attention mechanism within Vision Transformer (ViT) and enhance feature adaptability, with particular emphasis on challenging Generalized Category Discovery (GCD) tasks. The proposed SIEFormer is composed of two main branches, each corresponding to an \textit{implicit} and \textit{explicit} spectral perspective of the ViT, enabling joint optimization. The implicit branch realizes the use of different types of graph Laplacians to model the local structure correlations of tokens, along with a novel Band-adaptive Filter (BaF) layer that can flexibly perform both band-pass and band-reject filtering. The explicit branch, on the other hand, introduces a Maneuverable Filtering Layer (MFL) that learns global dependencies among tokens by applying the Fourier transform to the input ``value" features, modulating the transformed signal with a set of learnable parameters in the frequency domain, and then performing an inverse Fourier transform to obtain the enhanced features. Extensive experiments reveal state-of-the-art performance on multiple image recognition datasets, reaffirming the superiority of our approach through ablation studies and visualizations. Code is available at: \textcolor{red}{\url{https://github.com/Ashengl/SIEFormer}}.
\end{abstract}

\begin{IEEEkeywords}
Vision Transformer, Graph Fourier Filter, Semi-Supervised Learning, Novel Category Discovery.
\end{IEEEkeywords}

\section{Introduction}
\IEEEPARstart{V}{isual} recognition, a cornerstone of computer vision, has witnessed remarkable advancements in the past decade, fueled by the dynamic evolution of deep learning methods \cite{resnet, MAE, ViT} and the availability of publicly accessible large-scale datasets \cite{imagenet}. While the use of deep learning techniques has pushed visual recognition systems to unprecedented heights, a key limitation remains - these systems heavily rely on a fundamental assumption that the data used for training, validation and test, adhere to the identical label distribution space or domain. Transitioning deep learning from controlled environments to open-world settings is a formidable and challenging task because it requires innovative approaches to empower visual recognition systems in dynamic data landscapes.  

\begin{figure}[t]
	\centering
	\includegraphics[width=0.99\columnwidth]{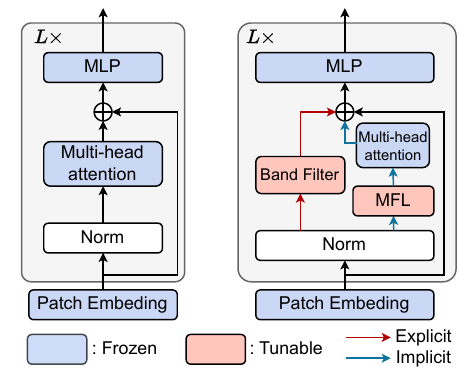} 
 \vspace{-5ex}
	\caption{Comparison between the standard Vision Transformer (ViT) on the left and the proposed SIEFormer on the right. SIEFormer introduces two spectral views: \textit{implicit}, using graph Laplacian filtering (Band-adaptive Filter, BaF) on \textit{value} features generated from the self-attention, and \textit{explicit}, applying the Fourier transform (Maneuverable Filtering Layer, MFL) for joint optimization and the above optimization strategy is beneficial to generate discriminative features which is helpful to discover novel categories.}
 \vspace{-3ex}
	\label{fig1}
\end{figure}

To realize this goal, researchers have explored various research areas, including Semi-Supervised Learning \cite{SSL0}, which leverages both labeled and unlabeled data for robust model training, Zero-Shot Learning \cite{lampert2013attribute} for classifying objects from previously unseen classes without specific training, Open-Set Recognition \cite{scheirer2012toward} that identifies whether unlabeled images belong to known classes, and Novel Category Discovery (NCD) \cite{Han2019learning}. Initially, NCD assumed that all unannotated images belong to unknown categories, which is less practical. Generalized Categories Discovery (GCD) \cite{GCD} further relaxed the assumption of NCD by introducing a more realistic setting where unlabeled images could belong to both known and unknown categories.

Through examining the existing methods, the accomplishments in GCD tasks can be roughly ascribed to two primary aspects. Early approaches, on the one hand, focus on extracting generic features using contrastive learning and then employing clustering algorithms to discover new categories. For instance, GCD \cite{GCD} employed both supervised contrastive learning and self-supervised contrastive learning, followed by a semi-supervised k-means algorithm to discover new categories. The recently proposed approaches, on the other hand, aim to use parametric classifiers in other unsupervised or self-supervised learning manner to further improve the ability to discover new categories. For example, SimGCD \cite{wen2023simgcd} introduced a parametric classifier and employed mean-entropy-maximization regularizer as well as knowledge distillation to guide the model to discover new categories. Most existing methods rely on a DINO \cite{DINO} pre-trained Vision Transformer (ViT) \cite{ViT} as a feature extractor, typically fine-tuning only the final attention block to maintain generalization. This reliance on pre-trained models facilitates the extraction of generic features and helps prevent overfitting to either known or unknown categories in GCD tasks \cite{promptcal}. Although fine-tuning additional layers may partially address the misalignment between the training objectives of pre-trained models and the specific goals of GCD, there remains a lack of architectures explicitly designed to meet the unique challenges of GCD.

An ideal GCD model should not only leverage the knowledge from pre-training but also adapt its architecture to effectively distinguish both seen and unseen samples. In this paper, we introduce a novel architecture, named Spectral-Interpretable and -Enhanced Transformer (SIEFormer), grounded in spectral theory—a model that capitalizes on spectral perspectives to enhance model interpretation and feature adaptability. There are also several works leveraging spectral analysis to enhance Vision Transformer. Reza \textit{et al}. \cite{azad2023laplacian} proposed constructing a Laplacian pyramid to fuse the feature for ViT at each level. Devin \textit{et al}. \cite{kreuzer2021rethinking} proposed to take advantage of the full Laplacian spectrum to learn the position of each node in a given graph. Li \textit{et al}. \cite{kreuzer2021rethinking} proposed to split spectral information of hyperspectral images to enhance ViT. However, SIEFormer considers utilizing spectral analysis to enhance attention mechanism, which is suitable for fine-tuning with contrastive learning. Concretely, the affinity kernels derived from the product of the \textit{query} and \textit{key} features within the Vision Transformer (ViT) can be transformed and reinterpreted as symmetric adjacency matrices through the spectral analysis, which is a technique to study the graph's spectral properties, particularly those related to the graph Laplacian containing information about the graph structure, allowing the introduction of different forms of graph Laplacian, ranging from the linear (e.g., All-Pass Filter) to the polynomial (e.g., Chebyshev Filter \cite{zhu2021unifying}) and even the rational (e.g., ARMA Filter \cite{bianchi2021graph}, Cayley Filter \cite{cayley}), to be adopted for \textbf{implicitly} re-weighting the \textit{value} features. We draw inspiration from the graph Laplacian to design a novel filter layer that adaptively fuses the characteristics of both band-pass and band-reject filters in the frequency domain. This approach effectively suppresses irrelevant noisy components while preserving discriminative ones, producing feature distributions that are semantically more compact and separable. As a result, the resulting feature space is more structured and interpretable, which in turn facilitates subsequent contrastive learning and improves the ability to discriminate between categories.

In parallel with the implicit branch, we also present a Maneuverable Filtering Layer (MFL) that incorporates a set of learnable parameters for filtering out noises within the \textit{value} features by \textbf{explicitly} using the Fourier transform and its inverse. The above filtering operation benefits contrastive learning by using the filter design to adaptively remove high-frequency signals (i.e., noisy features) or low-frequency signals (i.e., generic features), thereby uncovering class-specific features that enhance the ability to discover new categories.
In summary, our contributions can be outlined as follows:
\begin{itemize}
    \item We introduce a Spectral-Interpretable and -Enhanced Transformer (SIEFormer), which reinterprets attention mechanism within the ViT by using spectral analysis, enhancing feature interpretability and adaptability from two spectral views and challenging GCD tasks.
    \item An \textbf{implicit} branch is present to realize the use of different forms of the graph Laplacian to re-weight \textit{value} features, along with a novel Band-adaptive Filter (BaF) layer to model the local neighborhoods between tokens while maintaining adaptability between band-pass and band-reject filtering.
    \item An \textbf{explicit} branch, the core of which is a Maneuverable Filtering Layer (MFL), is also proposed to learn the global dependencies among tokens by modulating the Fourier-transformed \textit{value} features with a set of learnable parameters in the frequency domain and then reconstructing the filtered features through an inverse Fourier transform.
\end{itemize}

Existing methods typically enhance ViTs through frequency domain filtering via direct filtering \cite{zeng2023bandpass, azad2023laplacian}, spectral analysis \cite{wang2022antioversmooth,bai2022improving,wang2022vtc}, or graph Laplacian operations \cite{tse2023spectral,karmim2024supra}, often treating self-attention as a low-pass filter. In contrast, SIEFormer models self-attention as a graph and leverages spectral analysis to interpret and enhance the attention mechanism. More importantly, the integration of the dual-spectral views, as illustrated in Fig. \ref{fig1}, allows for joint optimization and generates discriminative feature to discover novel categories. Extensive experiments were conducted, leading to remarkable results that set new state-of-the-art performance on generic image recognition datasets and fine-grained image datasets. Ablation studies and visualizations further underscore the superiority of the proposed approach.

\section{Related Work}\label{sec2}
\subsection{Semi-Supervised Learning}
Semi-supervised learning (SSL) has been proposed and widely explored \cite{SSLsurvey, SSL0, semi_TNNLS1} to solve the problem of insufficient labeled sample volume. The problem that SSL addresses is achieving high classification accuracy with a small number of labeled samples and a large number of unlabeled samples coming from the same sample space. One of the most basic SSL approaches is the self-training behind which the fundamental idea is to augment the initial labeled data with the most confident predictions made on unlabeled data. Despite its simplicity, it has demonstrated compelling results, e.g., Pseudo-label \cite{PL} and MPL \cite{MPL}. Most recently, consistency regularization methods have risen to prominence. These methods encourage the model to make similar predictions under small perturbations, commonly applied to the input data. e.g., MixMatch \cite{mixmatch} proposed a fusion of consistency regularization and entropy minimization method; FixMatch \cite{fixmatch} introduced a fusion of consistency regularization with pseudo-labeling. Meanteacher \cite{meanteacher} updates the model by employing consistency regularization on student and teacher models' parameters. \textit{SSL relies on a small amount of labeled data and typically assumes that the unlabeled data does not contain unseen classes. Recently, Category Discovery relaxes this assumption by leveraging structural information from the labeled data to discover and distinguish novel classes within the unlabeled data  automatically.}

\subsection{Novel Category Discovery}
Unlike SSL, the goal of NCD is to discover new categories in unlabeled data using labeled data \cite{restune}. Originally normalized by DTC \cite{Han2019learning}, which has been proposed based on the unsupervised clustering method DEC \cite{DEC}, this approach first uses the ground truth of the labeled data to train the feature extractor using cross-entropy loss, and then categorizes the instances by maintaining a list of prototypes. Autonovel \cite{autonovel1} proposed to use self-supervised pre-training, and employ ranking to generate pseudo-labels for samples to train the model jointly with the ground truth of the labeled data. OpenMix \cite{OpenMix} utilizes the MixUp \cite{mixup} to generate new samples with labeled and unlabeled samples, together with the generation of more robust pseudo-labels. Inspired by Autonovel \cite{autonovel1, autonovel2}, NCL \cite{NCL} introduces contrastive learning to improve the ability of the model to generate discriminative representations. UNO \cite{UNO} proposes a way to jointly train models using labeled data as well as unlabeled data. Recently, NCD has also been deployed to scenarios, including the semantic segmentation \cite{ncdss}, which distinguishes objects and backgrounds while discovering novel categories, and the incremental learning-based NCD that proposes to discover new classes while maintaining the ability to recognize previously seen classes \cite{incd}.

The GCD problem was first introduced by \cite{GCD} and has gained widespread attention. The GCD task proposes to use DINO pre-trained ViT-B/16 \cite{DINO}, and utilize self-supervised learning with contrastive learning for fine-tuning and semi-supervised clustering in the testing phase. Similar to the GCD task setup, Open Set Domain Adaptation \cite{OSDA} and Open-World Semi-Supervised Learning \cite{openldn} are both designed to address the problem of new class discovery in an open-world setup. DCCL \cite{DCCL} proposed to alternately estimate underlying visual conceptions and learn conceptional representation. SimGCD \cite{wen2023simgcd} proposed a parametric classification method that benefits from entropy regularization, and achieves state-of-the-art performance on multiple GCD benchmarks. However, the GCD algorithm encounters some limitations. Concretely, freezing the backbone network prevents it from quickly adapting to downstream tasks, while a large number of false negatives can lead to degradation of the semantic representation \cite{promptcal}. To address these problems, PromptCAL \cite{promptcal} proposed prompt-based contrastive affinity learning, while SPTNet \cite{wang2024sptnet} introduced the spatial prompt tuning approach. \textit{Despite these advancements, the inability of the backbone network to better adapt to downstream tasks is still not well addressed. Therefore, this research proposes the SIEFormer, which capitalizes on spectral perspectives to enhance feature understanding and adaptability.}

\subsection{Graph Filtering} 
At the intersection between deep learning and methods for modeling structured data, the function of Graph Neural Networks (GNNs) is to infer arbitrary relationships present in the discrete data by extracting reliable representations \cite{graph_TNNLS3}. Existing GNNs can be primarily categorized into two groups. Spatial-based GNNs \cite{velickovic2017graph, hamilton2017inductive} focus on the propagation and aggregation of messages between neighboring nodes, while spectral-based GNNs \cite{kipf2016semi, defferrard2016convolutional, klicpera2018predict, zhu2021interpreting, graph_TNNLS2} generally take full advantage of the graph Laplacian created in the spectral domain. According to whether the graph filter can be dynamically updated, spectral-based GNNs can be further divided into two categories: methods based on predetermined filters and those based on learnable filters. One of the most well-known predetermined-based graph convolutions is GCN \cite{kipf2016semi}, where graph filters are approximated as simplified $1^{st}$-order Chebyshev polynomials. The Personalized Propagation of Neural Predictions (PPNP) \cite{klicpera2018predict} and its fast approximation, APPNP, are derived from the Personalized PageRank and realize the ability to have low-pass filters. More recently, Zhu et al. \cite{zhu2021interpreting} derived adjustable graph kernels that can display low-pass or high-pass filtering capabilities on top of a unified target optimization framework with a feature-fitting function and a graph regularization term.

Polynomial filters have the advantage of having a finite impulse response to the signal and performing weighted moving average filtering on their local neighborhoods, enabling the use of learnable Chebyshev polynomials, such as ChebNet \cite{tang2019chebnet}. Nevertheless, polynomial filters are difficult to model sharp changes in signal frequency response due to their smoothness \cite{isufi2016autoregressive, tremblay2018design}, leading to a growing interest in exploring rational filters. For example, Levie et al. \cite{levie2018cayleynets} designed a CayleyNet that approximates the effect of rational filters, but it shares high similarities with the Chebyshev filters in that it uses a fixed number of Jacobi iterations to approach the matrix inversion with a sequence of differentiable operations. Then, Bianchi et al. \cite{bianchi2021graph} proposed to use a more advanced rational filtering approach, namely Auto-Regressive Moving Average (ARMA) filters that integrate polynomial filters, also treated as the MA term, with an additional AR term \cite{zhou2003learning, zhu2003semi}. \textit{Inspired by graph filtering, this paper proposes that the attention mechanism in ViTs can be interpreted as a graph, and considers leveraging spectral analysis to enhance the attention mechanism. Our analysis further demonstrates that such a spectral perspective is more suitable for GCD.}

\begin{figure*}[!t]
\centering
\includegraphics[width=0.96\textwidth]{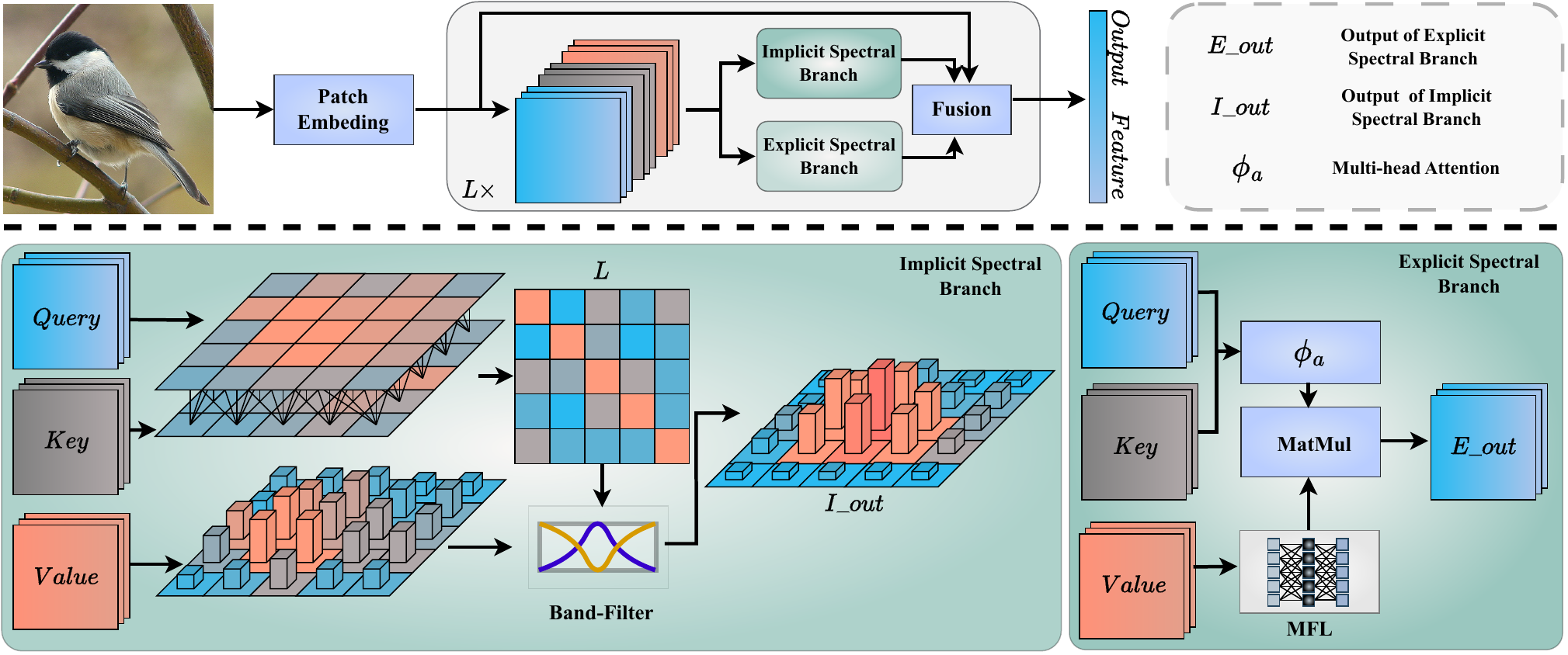}%
\vspace{-2ex}
\caption{Overview of SIEFormer. Implicit Spectral Branch represents the implicit use of Band-adaptive Filter to operate the \textit{values} in self-attention with the eigenvalues of the Laplace matrix, and Explicit Spectral Branch stands for directly converting \textit{values} to the spectral domain using the fast Fourier transform and reconstructing \textit{values} using MFL. }\label{fig_main}
\vspace{-3ex}
\end{figure*}

\section{Methodology}\label{sec3}
\begin{table}[!t]
\centering
\caption{Differences between Semi-Supervised Learning (SSL), Novel Category Discovery (NCD) and Generalized Category Discovery (GCD).}
\label{diff}
\vspace{-2ex}
\begin{tabular}{lcc}
\toprule
\multicolumn{1}{l}{Task} & \multicolumn{1}{c}{Training Dataset} & \multicolumn{1}{c}{Relationship: $\mathcal{Y}_l$ and $\mathcal{Y}_u$ }\\ \cmidrule(l){1-3} 

\multicolumn{1}{l|}{SSL} & \multicolumn{1}{c}{$\mathcal{D}=\mathcal{D}_l \cup \mathcal{D}_u$} & \multicolumn{1}{|c}{$\mathcal{Y}_l=\mathcal{Y}_u$}\\ \cmidrule(l){1-1}\cmidrule(l){3-3}

\multicolumn{1}{l|}{NCD} & \multicolumn{1}{c}{$\mathcal{D}_u=\{(x_i, y_i)\} \in \mathcal{X} \times \mathcal{Y}_u $} & \multicolumn{1}{|c}{$\mathcal{Y}_l\cap \mathcal{Y}_u=\phi$}\\ \cmidrule(l){1-1}\cmidrule(l){3-3}

\multicolumn{1}{l|}{GCD} & \multicolumn{1}{c}{$\mathcal{D}_l=\{(x_i, y_i)\} \in \mathcal{X} \times \mathcal{Y}_l $} & \multicolumn{1}{|c}{$\mathcal{Y}_l\in \mathcal{Y}_u$}\\
\bottomrule
\vspace{-5ex}
\end{tabular}
\end{table}

\subsection{Preliminaries}
\subsubsection{Problem Definition}
In the context of GCD tasks, it is assumed that the training dataset $\mathcal{D}$ is composed of a labeled sample set $\mathcal{D}_l$ and an unlabeled sample set $\mathcal{D}_u$, such that $\mathcal{D} = \mathcal{D}_l \cup \mathcal{D}_u$. The goal of GCD \cite{GCD} is to learn representations on $\mathcal{D}_l$ using clustering algorithms with a prior on the number of unlabeled classes, which can then be used to identify test samples. The differences between SSL, NCD, and GCD tasks are summarized and shown in Table \ref{diff}. In SSL, the objective is to leverage both labeled and unlabeled data to improve the performance on the known classes. In contrast, the goal of NCD is to discover new categories in unlabeled data using labeled data. The GCD task combines elements of both SSL and NCD, aiming to learn representations on the labeled data and identify test samples belonging to both known and unknown classes.

\subsubsection{Self-Attention in Transformer}
The core idea of the Transformer \cite{vaswani2017attention} is the use of the self-attention mechanism to re-weight the input features in an embedding space, followed by scaling the re-weighted features to the original shape of the input features. This concept employs a recursive-free aggregation scheme to effectively capture global dependencies in feature representations, which has inspired the development of Vision Transformer (ViT) \cite{ViT} to the computer vision community. In ViT, the input features can be denoted as $\mathbf{X} \in \mathbb{R}^{N\times C}$, where $N$ stands for the number of patches generated using the patch and position embedding approach, and each patch is encoded as a $C$-dimensional feature vector. The linear mapping functions $\mathrm{Linear}_Q(\cdot)$, $\mathrm{Linear}_K(\cdot)$ and $\mathrm{Linear}_V(\cdot)$ are adopted to project $\mathbf{X}$ into three $C_e$-dimensional spaces, corresponding to the matrices of $queries : \mathbf{Q}$, $ keys : \mathbf{K}$ and $values : \mathbf{V}$. Formally, this process is denoted as
\begin{equation}
\label{QKV}
\begin{aligned}
\mathbf{Q}&=\mathrm{Linear}_Q(\mathbf{X})=\mathbf{XW}_Q \in \mathbb{R}^{N\times C_e},\\
\mathbf{K}&=\mathrm{Linear}_K(\mathbf{X})=\mathbf{XW}_K \in \mathbb{R}^{N\times C_e},\\
\mathbf{V}&=\mathrm{Linear}_V(\mathbf{X})=\mathbf{XW}_V \in \mathbb{R}^{N\times C_e},
\end{aligned}
\end{equation}
where $\mathbf{W}_Q$,$\mathbf{W}_K$,$\mathbf{W}_V$ are learnable weights. 

The self-attention operation can be expressed in a generic formulation as
\begin{equation}
\label{ZQKV}
\mathbf{Z}_{i,:}=\frac{1}{N}\sum_{\forall j}\omega(\mathbf{Q}_{i,:}, \mathbf{K}_{j,:})\odot \mathbf{V}_{j,:},
\end{equation}
where $\odot$ is the Hadamard product, and $i$,$j \in \{1,\ldots,n\}$ are the position of the projected feature maps. The self-attention function $\omega (\cdot) : \mathbb{R}^{C_e}\times \mathbb{R}^{C_e} \to \mathbb{R}$ consists of a nonlinear function $\mathrm{softmax(\cdot)}$ and a relation function $\tau(\cdot): \mathbb{R}^{C_e}\times \mathbb{R}^{C_e} \to \mathbb{R}$ that is typically instantiated as
\begin{equation}
\label{TAU}
\tau(\mathbf{Q}_{i,:}, \mathbf{K}_{j,:})=\frac{\mathbf{Q}_{i,:}\mathbf{K}_{j,:}^T}{\sqrt{C_e}},
\end{equation}
where $\tau(\mathbf{Q}_{i,:}, \mathbf{K}_{j,:})$ can be regarded as the affinity kernel calculated by pairwise summing the feature at the position $i$ and those at all possible position $j$ \cite{vaswani2017attention, ViT}.  
The re-weighed feature $\mathbf{Z}_{i,:}$ in Eq. \eqref{TAU} is linearly transformed with a weight matrix $\mathbf{W}_Z \in \mathbb{R}^{C_e\times C}$ in order to revert to the original input dimension, alongside a standard residual connection to $\mathbf{X}$, resulting in the following representation
\begin{equation}
\label{YXZ}
\mathbf{Y}_{i,:}=\mathbf{X}_{i,:}+\mathbf{Z}_{i,:}\mathbf{W}_Z.
\end{equation}

\subsection{Spectral View of ViT}

Spectral analysis of the matrix, especially the spectral theorem\footnote{\url{https://en.wikipedia.org/wiki/Spectral_theorem}}, provides essential criteria for the existence of a canonical decomposition of the matrix, stating that the matrix involved should be diagonalizable, i.e., a matrix is equal to its transpose. However, in reality, the affinity matrix that encodes the token-wise relationships used in the original ViT model cannot guarantee its symmetry because the varying initializations of the weight matrices, $\mathbf{W}_Q$ and $\mathbf{W}_K$, are adopted to enhance the generalization ability of the model. Inspired by \cite{tao2018nonlocal, zhu2021unifying}, the affinity kernel summarising the token similarities of embedded matrices $\mathbf{W}_Q$ and $\mathbf{W}_K$, denoted $\mathbf{M}$, can be obtained by using $\mathbf{M}=[M_{ij}]=[\tau(\mathbf{Q}_{i,:}, \mathbf{K}_{j,:})]$ where $\tau(\cdot)$ represents the relation function of $\tau=\frac{\mathbf{Q}_{i,:}\mathbf{K}_{j,:}^T}{\sqrt{C_e}}$. The symmetric $\mathbf{\bar{M}}$ form of $\mathbf{M}$ can then be formed by
\begin{equation}
\label{BarM}
\mathbf{\bar{M}}=\frac{\mathbf{M}+\mathbf{M}^T}{2}.
\end{equation}

The transformed $\mathbf{\bar{M}}\in \mathbb{R}^{N\times N}$ is a symmetric matrix with $N$ real eigenvalue, so it is diagonalizable and has the potential to serve as an ensemble of graph filters (Implicit Spectral Branch in Fig. \ref{fig_main}). For clarity, some fundamental definitions and notations from spectral graph theory will be presented below, with a particular highlight on how the yielded $\mathbf{\bar{M}}$ can be exploited to neatly and plausibly interpret the ViT model from a spectral perspective.\\

\noindent\textbf{Graph Fourier Transform}
Given an undirected graph $\mathcal{G}=(\mathcal{V},\mathcal{E})$ consisting of a finite set of node $\mathcal{V}$ and an edge set $\mathcal{E}$, a graph signal or function $f:\mathcal{V}\to \mathbb{R}$ defined on the graph nodes, expressed as a vector $\boldsymbol{f}\in \mathbb{R}^N$, maps every node $\{v_i\}_{i=1,\ldots,N}$ in the set of nodes $\mathcal{V}$ with $\mathcal{V}=N$ to a real number $f(i)$ \cite{shuman2013emerging}. The adjacency matrix $\mathbf{A}\in\mathbb{R}^{N\times N}$ is used to characterize a graph with $N$ nodes and its symmetrically normalized form is denoted as $\Tilde{\mathbf{A}}=\mathbf{D}^{-\frac{1}{2}}\mathbf{A}\mathbf{D}^{-\frac{1}{2}}$ where $\mathbf{D}$ is degree matrix. The Laplacian matrix of graph is denoted by $\mathbf{L}=\mathbf{D}-\mathbf{A}\in \mathbb{R}^{N\times N}$ and popular symmetrically normalized graph Laplacian is defined as $\Tilde{\mathbf{L}}=\mathbf{D}^{-\frac{1}{2}}\mathbf{L}\mathbf{D}^{-\frac{1}{2}}=\mathbf{I}_N-\Tilde{\mathbf{A}}$, where $\mathbf{I}_N$ is the identity matrix with N identities. The graph Fourier transform $\boldsymbol{\hat{f}}$ of any graph signal $\boldsymbol{f}$ on the nodes of $\mathcal{G}$ can be defined as the expansion of $\boldsymbol{f}$ in terms of the eigenvectors derived from the eigenfunctions of the graph Laplacian $l$ and is denoted as
\begin{equation}
\label{FFT}
\mathcal{GF}\vert f\vert(\lambda_l)=\hat{f}(\lambda_l)=\left \langle \boldsymbol{f}, \boldsymbol{u}_l \right \rangle = \sum_{i=1}^Nf(i)u^*_l(i),
\end{equation}
where $u_l^*$ is equivalent to $u^T_l$ representing the $l_{th}$ eigenvector of the $\mathbf{L}$, and $\lambda_l$ is the corresponding $l_{th}$ eigenvalue. In view of the fact that the eigenvectors $\{u_l\}_{l=0,\ldots,N-1}$ of Laplacian matrix $\mathbf{L}$ constitute an orthogonal basis, the corresponding inverse graph Fourier transform exists, which can be expressed as $\mathcal{LGF}\vert\hat{f}\vert(i)=f(i)=\sum_{l=0}^{N-1} \hat{f}(\lambda_l)u_l(i).$\\

\noindent\textbf{Graph Filtering}
Since general filtering operators cannot be explicitly expressed in the node domain, a graph filter is introduced as an operator defined on the basis of the graph Fourier transform in order to be capable of passing or amplifying certain frequencies of the input signal while attenuating the other frequencies. Concretely, a frequency response function $g_{\theta}(\cdot)$ is introduced to act on each eigenvalue $\lambda$ of the eigenfunction and modify the components of the signal $\mathbf{V}$ by its product with the eigenvectors of $\Tilde{\mathbf{L}}$ \cite{defferrard2016convolutional, bianchi2021graph,zhu2021unifying}. Formally, this filtering process is denoted as
\begin{equation}
\label{FP}
\begin{aligned}
\hat{\mathbf{V}}&=\sum^N_{n=1}g_{\theta}(\lambda_n)u_nu_n^T\mathbf{V}\\
&=\mathbf{U}g_{\theta}(\mathbf{\Lambda})\mathbf{U}^T\mathbf{V},
\end{aligned}
\end{equation}
where $\mathbf{\Lambda}=diag[\lambda_0,\ldots,\lambda_{N-1}]\in \mathbb{R}^{N\times N}$, $\mathbf{U}=[u_0,\ldots,u_{N-1}]\in \mathbb{R}^{N\times N}$, and $g_{\theta}(\mathbf{\Lambda})=diag(\theta)$ denotes the non-parametric filter, with a vector of Fourier coefficients denoted as $\theta\in\mathbb{R}^R$ \cite{defferrard2016convolutional}.

\noindent\textbf{Spectral ViT}
By emulating the previously defined graph filtering, the parameter matrix $\mathbf{W}_Z\in \mathbb{R}^{C_e\times C}$ originated from the ViT model can be decomposed into two matrices $\mathbf{W}_{Z1}\in \mathbb{R}^{N\times N}$, where $\mathbf{W}_{z1}$ can be viewed as a generalization form of a set of graph filters. By substituting the two decomposed matrices into Eq. \eqref{YXZ}, it results in the following rewritten form
\begin{equation}
\label{DEC}
\mathbf{Y}=\mathbf{X}+\mathbf{TVW}_{Z1}\mathbf{W}_{Z2},
\end{equation}
where $\mathbf{T}=\mathbf{D}^{-1}_{\bar{\mathbf{M}}}\bar{\mathbf{M}}\in \mathbb{R}^{N\times N}$ is updated symmetric affinity kernel, with the diagonal matrix $\mathbf{D}_{\bar{\mathbf{M}}}\in \mathbb{R}^{N\times N}$ represents the degrees of all vertices of $\bar{\mathbf{M}}$ acquiring by using $\mathbf{D}_{ii}=\sum_j\bar{\mathbf{M}}_{ij}$, and $\mathbf{V}\in \mathbb{R}^{N\times C_e}$ denotes the linearly transferred node features, refers to $values$ in ViT. This equation can be trivially restored to Eq. \eqref{YXZ} by calculating the product of $\mathbf{T}$ and $\mathbf{V}$, and the unfactorized parameter $\mathbf{W}_Z$. Note that the indices are omitted in Eq. \eqref{DEC} and sequel for readability. After some arrangement, the later term of Eq. \eqref{DEC} can be represented as an operator in the form of $\mathcal{S}(\mathbf{T}, \mathbf{V})$ which is conducive to theoretically viewing and interpreting the ViT model from a spectral perspective by operating the spectral filtering on a fully-connected graph with the node feature $\mathbf{V}$ and the affinity kernel $\mathbf{T}$ as inputs. Our derived operator, capable of capturing the correlations between tokens, operates at a higher level of abstraction compared to the operator proposed in \cite{zhu2021unifying}, which focuses solely on the pixel level.

With the redefined spectral Vision Transformer (ViT), it becomes crucial to select appropriate graph filters and apply them to the node features. The linear filter function $g$ controlled by the parameter $\theta$ in Eq. \eqref{FP} is the most intuitive way to adjust the weights of the frequency components, however, it suffers from several significant drawbacks. Specifically, the spectral filters introduced in Eq. \eqref{FP} are not localised because the complete projection on the eigenvectors implies that the filters take into account the interactions of an individual node with the entire graph which weakens the significance of local node neighbourhoods. Furthermore, the expensive computation costs are not only reflected in the eigen-decomposition function of $\Tilde{\mathbf{L}}$, but also involve the calculation of the double product with the eigenvector basis $\mathbf{U}$. In addition, since the filters are determined by a specific Laplacian spectrum, it is problematic to adaptively transfer to modelling graphs with different structures \cite{bianchi2021graph}.

To cope with the limitations of the linear frequency response function, a filter function composed of a set of polynomials considering higher-order frequencies is used to perform a weighted moving average over the input signal to improve its representation ability \cite{tremblay2018design} which can be defined as $g_\theta(\mathbf{\Lambda})=\sum_{k=0}^{K-1}\theta_k\mathbf{\Lambda}^k$, where $\theta\in \mathbb{R}^K$ is the vector of polynomial coefficients. In reality, it is recommended to set the order of these general-purpose polynomial filters small, so that the learning complexity is greatly reduced from $\mathcal{O}(N)$ to $\mathcal{O}(K)$, namely, from the data dimension $N$ to the polynomial filter order $K$, where $K\ll N$. Furthermore, spectral filters represented by polynomial filters are exactly linear combinations of nodes and corresponding $K$-hop neighbourhoods, meaning that these filters are localised in the node space \cite{defferrard2016convolutional}, where these polynomial filters can be expressed by taking the $k$-th power of the normalized Laplacian matrix $\Tilde{\mathbf{L}}$ as it is diagonalizable, namely, $\Tilde{\mathbf{L}}^k=\mathbf{U}diag[\lambda^k_0,\ldots,\lambda^k_{N-1}]\mathbf{U}^T$. Then, the filtering operation of graph signals becomes
\begin{equation}
\label{POL}
\hat{\mathbf{V}}=(\theta_0\mathbf{I}+\theta_1\mathbf{\Tilde{L}}+\theta_2\mathbf{\Tilde{L}}^2+\cdots+\theta_K\mathbf{\Tilde{L}}^K)\mathbf{V}.
\end{equation}

Chebyshev polynomials, one of the most well-known polynomial functions, are often employed to attenuate undesired oscillations around the cutoff frequency responses of $\Tilde{\mathbf{L}}$. Such polynomials can also be utilised to approximate graph filters as in \cite{wang2018non} by rewriting the recursive Chebyshev expansion originally proposed in \cite{defferrard2016convolutional} as $T_k(\mathbf{\Tilde{L}})=2\mathbf{\Tilde{L}}T_{k-1}(\mathbf{\Tilde{L}})-T_{k-2}(\mathbf{\Tilde{L}})$. Formally, a truncated expansion used to parameterise the filters is expressed as
\begin{equation}
\label{CHEB}
\hat{\mathbf{V}}=\sum_{k=0}^{K-1}\theta_kT_k(\Tilde{\mathbf{L}})\mathbf{V},
\end{equation}
where $T_0(\mathbf{\Tilde{L}})=\mathbf{I}_N$, $T_1(\mathbf{\Tilde{L}})=\mathbf{\Tilde{L}}$, and $\mathbf{\Tilde{L}}=2\mathbf{L}/\lambda_{max}-\mathbf{I}_N$ is a scales Laplacian matrix. The spectral filtering operation defined above is independent of the Laplacian spectrum \cite{zhang2018end} and its Chebyshev approximation of the filters makes it applicable for fast filtering of graph signals (a.k.a., ChebNet \cite{defferrard2016convolutional}). The use of Chebyshev filters in the context of the spectral graphs is generalized to formulate and explain the non-local neural networks \cite{wang2018non} from the spectral view by treating $\mathbf{V}$ described in $\eqref{DEC}$ as the input graph signal. More specifically, a generalized form of the spectral self-attention operator with Chebyshev filters can be denoted as
\begin{equation}
\label{cheb}
\mathcal{S}_{\mathrm{cheb}}(\mathbf{T}, \mathbf{V})=\mathbf{VW}^1_Z+\mathbf{TVW}^2_Z+\sum_{k=2}^{K-1}\mathbf{T}^K\mathbf{VW}_Z^{k+1},
\end{equation}
where $\mathbf{T}=\mathbf{D}_{\Tilde{\mathbf{M}}}^{-\frac{1}{2}}\Tilde{\mathbf{M}}\mathbf{D}_{\Tilde{\mathbf{M}}}^{-\frac{1}{2}}$ denotes the normalized matrix of $\Tilde{\mathbf{M}}$. 

Considering the fact that the most similar tokens to a given token are the token itself and its surrounding tokens, we can approximate this local neighborhood by taking the $1^{st}$-order approximation of the Chebyshev filter \cite{zhu2021unifying}. Consequently, the operation from the spectral viewpoint of self-attention can be formulated as follows:
\begin{equation}
\label{1cheb}
\mathcal{S}_{\mathrm{cheb}}(\mathbf{T}, \mathbf{V})=\mathbf{VW}^1_Z+\mathbf{T}\mathbf{VW}^2_Z.
\end{equation}

\begin{figure}[!t]
\centering
\includegraphics[width=0.49\textwidth]{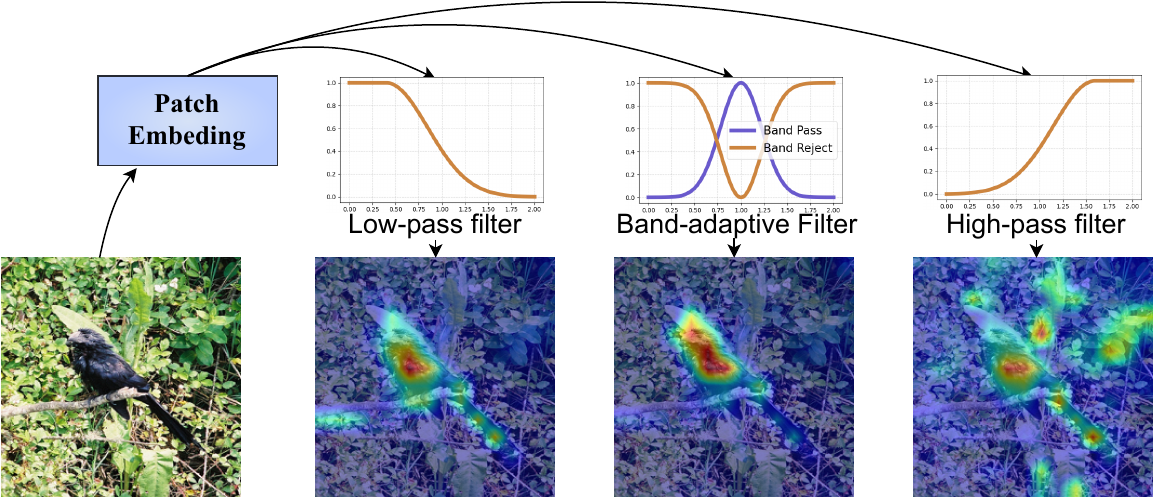}%
\vspace{-2ex}
\caption{Heatmaps using different filters. Heatmaps taken from the \textit{values} matrix of filter outputs trained on the CUB-200 dataset \cite{cub} using the low-pass filter, high-pass filter, and the proposed Band-adaptive Filter (BaF), respectively.}\label{fig2}
\vspace{-3ex}
\end{figure}

The spectral analysis presented above provides a solid theoretical foundation for understanding how token-wise correlations in ViTs can be reinterpreted through graph-based and Fourier-based operations. This analysis not only offers a mathematical perspective on the attention mechanism but also informs the design of practical spectral filtering modules. In the following section, we will elaborate on how these theoretical insights guide the development of two complementary branches—the implicit and explicit spectral branches—and how they are seamlessly integrated into the core architectural components of the proposed SIEFormer, directly corresponding to the discussed spectral perspectives.

\subsection{Spectral-Interpretable and -Enhanced Transformer}
By revisiting and rethinking the Vision Transformer (ViT), we can rebuild an enhanced Transformer architecture that is interpretable from a spectral perspective. As illustrated in Fig. \ref{fig_main}, the SIEFormer comprises two key parallel branches: an implicit branch coupled with the Band-adaptive Filter (BaF) layer, which focuses on learning correlations within the local neighborhood of tokens, and an explicit spectral branch underpinned by the Maneuverable Filtering Layer (MFL), which concentrates on capturing global dependencies among tokens. The following sections elaborate on the design and functionality of these two complementary branches. \\  
\noindent\textbf{Band-adaptive Filter (BaF)}
Filters generally exhibit one or more characteristics of the low-pass or high-pass \cite{Beyond}. When visualizing the responses of these types of filters on feature maps, it becomes apparent that high-frequency signals often correspond to variable background details, while low-frequency signals predominantly capture the main object of interest, as shown in Fig. \ref{fig2}. This observation serves as inspiration for the development of a novel filtering approach capable of harnessing the advantages of different filters. To avoid the heavy computational cost of the higher-order graph Laplacian matrices $\Tilde{\mathbf{L}}$, the previously mentioned filter functions are written in linear and quadratic form as     
\begin{equation}
    \label{ci}
    \begin{aligned}
        g_{l}&=p(a\mathbf{\Lambda}-(1-a)\mathbf{I}),\\
        g_{h}&=p((1-a)\mathbf{I}-a\mathbf{\Lambda}),
    \end{aligned}
\end{equation}
where $g_{l}$ and $g_{h}$ represent low-pass and high-pass filters, parameter $p>0$ controls the magnitude of frequency profile, parameter $a \in (0, 1)$ determines the cut-off frequency \cite{Beyond}, and $\mathbf{\Lambda}$ represents all eigenvalues of normalized Laplacian matrix $\Tilde{\mathbf{L}}$. 

Building upon the analysis of the aforementioned filters, the Band-adaptive filter is proposed to complementarily filter signals such that the filter works selectivity between the band-pass and band-reject filters. Formally, the designed filters are expressed as 
\begin{equation}
\label{bp}
\begin{aligned}
g_{p}(\mathbf{\Lambda})&=\mathbf{\Lambda}^2-\mathbf{I},\\
g_{r}(\mathbf{\Lambda})&=\mathbf{\Lambda}^2,
\end{aligned}
\end{equation}
where $g_p$ and $g_r$ stand for band-pass and band-reject filters, respectively. The Band-adaptive Filtering operation of graph signals can then be written as
\begin{equation}
\label{POL}
\hat{\mathbf{V}}=\mathbf{U}\sum_{*\in \{p, r\}}g_{\theta}^{*}(\mathbf{\Lambda})\mathbf{U}^T\mathbf{V}.
\end{equation}

The proposed filter can either enhance or suppress the specific frequencies of the signal by automatically accommodating the parameter $\theta$. Introducing the Band-adaptive Filter to the complete form of the $1^{st}$-order Chebyshev approximation in Eq.\eqref{1cheb} , resulting in 
\begin{equation}
\label{BaF}
\mathcal{S}_{BaF}(\mathbf{T}, \mathbf{V})= \mathbf{T}^2\mathbf{V}\mathbf{W}_{P} + (\mathbf{T}^2-\mathbf{I}_N)\mathbf{V}\mathbf{W}_{R},
\end{equation}
where $\mathbf{W}_{P}$ and $\mathbf{W}_{R}$ are the parameter matrices of shape of $\mathbb{R}^{C\times C}$. In order to stabilise the output of the network to avoid excessive losses that would lead to slow convergence of the model, and to make better use of the pre-training parameters to improve the generalization of the model, $\mathbf{W}_{P}$ and $\mathbf{W}_{R}$ are initialized to a zero matrix. In addition, it presents the following Proposition to better reveal the essence of adopting the Band-adaptive Filter.

\begin{prop}
The symmetrically normalized adjacency matrix \(\tilde{\mathbf{A}}\) is a reasonable simplification of \(\mathbf{T}\) in Eq. (\ref{BaF}).
\end{prop}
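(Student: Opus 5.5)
We make the informal statement precise by separating two things that a ``reasonable simplification'' should guarantee. First, \(\mathbf{T}\) and \(\tilde{\mathbf{A}}\) should be spectrally equivalent operators. Second, replacing \(\mathbf{T}\) by \(\tilde{\mathbf{A}}\) in Eq.~\eqref{BaF} should keep the band-pass/band-reject reading of Eq.~\eqref{bp}. The plan is to show both properties and to collect the required assumptions along the way.

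\textbf{Step 1: identify \(\mathbf{T}\) with \(\tilde{\mathbf{A}}\).} Take the graph \(\mathcal{G}\) with adjacency \(\mathbf{A}=\bar{\mathbf{M}}\), the symmetrized affinity of Eq.~\eqref{BarM}.
\begin{itemize}
\item Eq.~\eqref{cheb} uses the symmetric normalization \(\mathbf{T}=\mathbf{D}^{-1/2}\bar{\mathbf{M}}\mathbf{D}^{-1/2}\). With this choice \(\mathbf{T}=\tilde{\mathbf{A}}\) by definition.
\item Eq.~\eqref{DEC} uses the random-walk form \(\mathbf{T}=\mathbf{D}^{-1}\bar{\mathbf{M}}\). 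In that case we use the similarity
\begin{equation*}
\mathbf{D}^{-1}\bar{\mathbf{M}}=\mathbf{D}^{-1/2}\tilde{\mathbf{A}}\mathbf{D}^{1/2}.
\end{equation*}
Hence both forms have the same real spectrum, and \(\mathbf{T}^2=\mathbf{D}^{-1/2}\tilde{\mathbf{A}}^2\mathbf{D}^{1/2}\).
\end{itemize}
The filter \(\mathcal{S}_{BaF}\) therefore applies the same frequency response in both cases; only the eigenbasis is rescaled by \(\mathbf{D}^{\pm1/2}\). The symmetric version is preferable because its eigenvectors \(\mathbf{U}\) are orthonormal, which Eq.~\eqref{FP} requires for a genuine graph Fourier transform.

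\textbf{Step 2: link to the Laplacian spectrum.} Since \(\tilde{\mathbf{L}}=\mathbf{I}_N-\tilde{\mathbf{A}}\), we have
\begin{equation*}
\tilde{\mathbf{A}}=\mathbf{U}(\mathbf{I}-\mathbf{\Lambda})\mathbf{U}^T .
\end{equation*}
The two terms of \(\mathcal{S}_{BaF}\) then become
\begin{equation*}
\tilde{\mathbf{A}}^2=\mathbf{U}(\mathbf{I}-\mathbf{\Lambda})^2\mathbf{U}^T,\qquad
\tilde{\mathbf{A}}^2-\mathbf{I}=\mathbf{U}\big((\mathbf{I}-\mathbf{\Lambda})^2-\mathbf{I}\big)\mathbf{U}^T .
\end{equation*}
Write \(\mu=1-\lambda\) for the eigenvalues of \(\tilde{\mathbf{A}}\). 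For nonnegative weights \(\mu\in[-1,1]\); in the indefinite case we bound \(|\mu|\le1\) by a normalized-operator-norm argument. The responses are \(\mu^2\) and \(\mu^2-1\), which is exactly the form of Eq.~\eqref{bp} with \(\mathbf{\Lambda}\) read as the spectrum of \(\tilde{\mathbf{A}}\).

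\textbf{Step 3: confirm the filter types.} On \(\lambda\in[0,2]\):
\begin{itemize}
\item \(g_r=\mu^2\) vanishes at the middle frequency \(\lambda=1\) and equals \(1\) at both ends. It is therefore band-reject.
\item \(g_p=\mu^2-1\) has magnitude \(1\) at \(\lambda=1\) and \(0\) at \(\lambda=0\) and \(\lambda=2\). It is therefore band-pass.
\end{itemize}
With learnable \(\mathbf{W}_P\) and \(\mathbf{W}_R\), the combination can interpolate between these two regimes. This is the property the simplification must preserve, and it holds.

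\textbf{Main obstacle.} The difficulty is that \(\bar{\mathbf{M}}\) may have negative entries, because it is built from raw \(\mathbf{Q}\mathbf{K}^T\) scores. Then the degrees \(\mathbf{D}_{ii}=\sum_j\bar{\mathbf{M}}_{ij}\) may be nonpositive and \(\mathbf{D}^{-1/2}\) is undefined. My first attempt is to argue on the nonnegative kernel obtained after an entrywise exponential, as in softmax attention. This kernel is still symmetric because \(\bar{\mathbf{M}}\) is symmetric, so Steps~1--3 apply verbatim. The proof would then state that it covers this nonnegative-kernel case, rather than claim the identification for arbitrary signed \(\bar{\mathbf{M}}\).
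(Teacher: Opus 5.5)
Your argument takes a genuinely different route from the paper's. The paper reads the proposition as equivalent to the spectral bound \(\operatorname{spec}(\tilde{\mathbf{L}})\subset[0,2]\). It proves \(\tilde{\mathbf{L}}\succeq 0\) from the sum of squares \(\sum_{(i,j)}\bigl(f(i)/\sqrt{d(i)}-f(j)/\sqrt{d(j)}\bigr)^2\), obtains \(\lambda_{max}\le 2\) from a Rayleigh-quotient bound, and then substitutes these bounds to conclude \(\mathbf{T}=\mathbf{I}_N-\tilde{\mathbf{L}}=\tilde{\mathbf{A}}\). Its justification is that \(\tilde{\mathbf{A}}\) is a well-scaled operator with spectrum in \([-1,1]\).

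You instead identify \(\mathbf{T}\) with \(\tilde{\mathbf{A}}\) algebraically: equality under the normalization of Eq.~\eqref{cheb}, and the similarity \(\mathbf{D}^{-1}\bar{\mathbf{M}}=\mathbf{D}^{-1/2}\tilde{\mathbf{A}}\mathbf{D}^{1/2}\) under that of Eq.~\eqref{DEC}. You then check by spectral calculus that \(\mu^2\) and \(\mu^2-1\) are band-reject and band-pass on \(\lambda\in[0,2]\). This gives you several things the paper never addresses:
\begin{itemize}
\item it reconciles the paper's two inconsistent definitions of \(\mathbf{T}\);
\item it says exactly what the substitution changes (a conjugation of the eigenbasis by \(\mathbf{D}^{\pm1/2}\), not the frequency response);
\item it checks the band-pass/band-reject reading of Eq.~\eqref{bp};
\item it makes explicit the nonnegativity hypothesis the paper uses tacitly. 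The paper's incidence-matrix argument presumes an ordinary graph, and its pseudo-code enforces nonnegativity with a ReLU on the symmetrized scores.
\end{itemize}
What the paper's route supplies, and yours does not, is a proof of the spectral bound itself.

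Two repairs are needed. First, your remark that in the indefinite case \(|\mu|\le1\) follows from a normalized-operator-norm argument is false. For \(\bar{\mathbf{M}}=\begin{pmatrix}3&-2\\-2&3\end{pmatrix}\) every degree equals \(1\), so \(\tilde{\mathbf{A}}=\bar{\mathbf{M}}\), whose eigenvalues are \(1\) and \(5\). Drop that remark and state the nonnegative-kernel hypothesis at the outset, as your last paragraph already proposes. The paper's ReLU also yields a nonnegative kernel but can produce zero-degree rows; your entrywise exponential avoids this.

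Second, Step~3 rests on \(\mu\in[-1,1]\), which you only assert, and that bound is the whole content of the paper's proof. Your own Step~1 closes it in one line. For \(\bar{\mathbf{M}}\ge0\) with positive degrees, \(\mathbf{D}^{-1}\bar{\mathbf{M}}\) is row-stochastic, so its spectral radius is at most \(\|\mathbf{D}^{-1}\bar{\mathbf{M}}\|_\infty=1\). Since \(\tilde{\mathbf{A}}\) is symmetric and similar to it, its eigenvalues are real and lie in \([-1,1]\).

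If you prefer the quadratic-form route, the lower bound \(\mu\ge-1\) needs the companion identity \(\boldsymbol{f}^T(\mathbf{I}+\tilde{\mathbf{A}})\boldsymbol{f}=\tfrac12\sum_{i,j}\bar{\mathbf{M}}_{ij}\bigl(f(i)/\sqrt{d(i)}+f(j)/\sqrt{d(j)}\bigr)^2\ge0\). Positive semidefiniteness of \(\tilde{\mathbf{L}}\) by itself only gives \(\mu\le1\), a point the paper's Rayleigh-quotient step glosses over.
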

\begin{proof}
Let \(\lambda_{min} \leq \cdots \leq \lambda_{max}\) be the eigenvalue of the normalized Laplacian matrix \(\tilde{\mathbf{L}}\), the declaration is equivalent to confirm that the eigenvalue bounds of \(\tilde{\mathbf{L}}\) lie in \([0, 2]\).  

Given the fact that the Laplacian matrix \(\mathbf{L}\) is symmetric and diagonally dominant, it is positive-semidefinite\footnote{\url{https://en.wikipedia.org/wiki/Laplacian_matrix}}. If let \(\mathbf{B}\) be an incidence matrix of an orientation of the graph \(\mathcal{G}\), then \(\mathbf{L}=\mathbf{B}\mathbf{B}^{\operatorname{T}}\), thus \(\textbf{\emph{f}}^{\operatorname{T}}\mathbf{L}\textbf{\emph{f}}=\mathbf{B}\textbf{\emph{f}}^{2} \geq 0\) for \( \forall \textbf{\emph{f}} \in \mathbb{R}^{N}\), and it can have \(\tilde{\mathbf{L}}=\mathbf{V}\mathbf{V}^{\operatorname{T}}\) for \(\mathbf{V}=\mathbf{D}^{-\frac{1}{2}}\mathbf{B}\) which means that \(\tilde{\mathbf{L}}\) is also a positive-semidefinite matrix as for any \( \textbf{\emph{f}} \in \mathbb{R}^{N}\):  
\[
    \begin{split}
    \textbf{\emph{f}}^{\operatorname{T}}\tilde{\mathbf{L}}\textbf{\emph{f}} & =\textbf{\emph{f}}^{\operatorname{T}}\left(\mathbf{I}-\mathbf{\tilde{A}}\right)\textbf{\emph{f}} \\
    & = \sum_{i \in \mathbf{V}}
    \emph{f}\left(i\right)^{2}-\sum_{(i,j) \in \mathbf{E}}\frac{2\emph{f}\left(i\right)\emph{f}\left(j\right)}{\sqrt{d\left(i\right)d\left(j\right)}}\\
    & = \sum_{(i,j) \in \mathbf{E}}\left(\frac{\emph{f}\left(i\right)}{\sqrt{d\left(i\right)}}-\frac{\emph{f}\left(j\right)}{\sqrt{d\left(j\right)}}\right)^{2} \geq 0,
    \end{split}
\]
where \(d\left(i\right)\) is the \(i^{th}\) element of the degree matrix \(\mathbf{D}\), and the result indicates that \(\tilde{\mathbf{L}}\) has nonnegative value, namely, \(\lambda_{min} = 0\). Here, the graph vertices and edges are represented as \(\mathbf{V}\) and \(\mathbf{E}\) for readability. 

Considering the positive semi-definiteness of \(\mathbf{\tilde{L}}\), it implies
\(-\textbf{\emph{f}}^{\operatorname{T}}\mathbf{\tilde{A}\textbf{\emph{f}}} \leq \textbf{\emph{f}}^{\operatorname{T}}\textbf{\emph{f}} \Rightarrow \textbf{\emph{f}}^{\operatorname{T}}\left(\mathbf{I}-\mathbf{\tilde{A}}\right)\textbf{\emph{f}} \leq 2\textbf{\emph{f}}^{\operatorname{T}}\textbf{\emph{f}} \Rightarrow \frac{\textbf{\emph{f}}^{\operatorname{T}}\mathbf{\tilde{L}\textbf{\emph{f}}}}{\textbf{\emph{f}}^{\operatorname{T}}\textbf{\emph{f}}} \leq 2.\) Then, the upper bound can be derived using the Rayleigh quotient\footnote{\url{https://en.wikipedia.org/wiki/Rayleigh_quotient}}, namely, \(\lambda_{max} \leq 2.\)

Finally, the bounds of the resulting \(\lambda\) can be substituted into Eq. (\ref{BaF}) to give \(\mathbf{T}=\mathbf{I}_{N}-\tilde{\mathbf{L}}=\mathbf{\tilde{A}}\).
\end{proof}

Replacing the parameter $\mathbf{T}$ with $\Tilde{\mathbf{A}}$ in Eq. \eqref{BaF}, resulting in the spectral view of the proposed Transformer that uses the BaF filter which can be formulated as
\begin{equation}
\label{BF1}
    \mathcal{S}_{BaF}(\Tilde{\mathbf{A}}, \mathbf{V})= \Tilde{\mathbf{A}}^2\mathbf{V}\mathbf{W}_{P} + (\Tilde{\mathbf{A}}^2-\mathbf{I}_N)\mathbf{V}\mathbf{W}_{R}.
\end{equation}

Since the \textit{values} matrix in the self-attention mechanism is treated as node signals, spectral domain graph filtering usually considers signals from neighbouring nodes, thus it might hinder the ability to capture the long-term dependencies of features in the frequency domain. To cope with this issue, we propose the Maneuverable filtering layer that realises the processing of \textit{value} matrix to automatically filter out the noise in the \textit{value} matrix detached from the graph adjacency matrix.

\begin{algorithm}[!t] \label{alg1}
\caption{SIEFormer attention mechanism in Python style}
\label{algorithm: SIEFormera}
\quad \textcolor{gre}{\# Input: Feature $x$}

\quad \textcolor{gre}{\# Output: Fused feature $y$}

\quad \textcolor{gre}{\# Initialization: Filter parameter $W_f$ intialized with $0$, MFL parameter $W_M$ inintialized with $1+0i$.}\\

\quad $W_f$.require\_grad=True \textcolor{gre}{\#  unfreeze the parameter of filter}

\quad $W_M$.require\_grad=True \textcolor{gre}{\#  unfreeze the parameter of MFL}\\

\quad Q, K, V = qkv(x) \textcolor{gre}{\#  Q, K, V in shape of (h, n, c)}

\quad Q$_f$, K$_f$, V$_f$ = qkv(x).reshape(n, c$\times$h) \textcolor{gre}{\#  query, key, value for each patch without multi-head}

\quad V$_l$ = ifft($f$(fft(V), $\theta_M$)) \textcolor{gre}{\#  Maneuverable Filtering Layer}

\quad exp\_spe\_out = multi-head-attention(Q, K, V$_l$) \textcolor{gre}{\#  explicit spectral-view output with MFL}\\

\quad attn = Q$_f$@K$_f^T$ \textcolor{gre}{\#  attention without multi-head}

\quad attn\_hat = ReLU((attn + attn$^T$)/2) \textcolor{gre}{\#  make the matrix symmetric}

\quad L = I - Norm(attn\_hat) \textcolor{gre}{\#  Laplacian matrix}

\quad imp\_spe\_out = $f$(L, V$_f$, $W_f$) \textcolor{gre}{\#  implicit spectral-view output with BaF}\\

\quad y = exp\_spe\_out + imp\_spe\_out \textcolor{gre}{\#  fused feature}
\end{algorithm}

\noindent\textbf{Maneuverable Filtering Layer (MFL)}
The processing of \textit{values} using graph filter is based on the Laplace matrix obtained from the graph adjacency matrix, which means the processing of the signal is highly correlated with the similarity between each node. However, the spectral aware features of \textit{values} that are detached from the graph adjacency matrix with truncated approximations (local neighborhoods of tokens) are also of interest, which can be beneficial in capturing the deeper information of the spectral domain (global dependencies of tokens). Thus, we propose the MFL for the explicit use of spectral domain features (Explicit Spectral Branch in Fig. \ref{fig_main}). Concretely, for a given \textit{value} matrix $\mathbf{V}\in \mathbb{R}^{N\times C_e}$, we first perform a 2D Fourier Transform $\mathcal{F}$ on to transform it into the frequency domain:
\begin{equation}
    \label{ft1}
    \Bar{\mathbf{V}}=\mathcal{F}(\mathbf{V})\in \mathbb{R}^{N\times C_e},
\end{equation}
where $\Bar{\mathbf{V}}$ represents spectrum of $\mathbf{V}$ and is a complex tensor. By introducing a complex matrix parameter $\mathbf{W}_C$, we can automatically modulate the spectrum and filter out the noise, which can be expressed as
\begin{equation}
    \label{ft2}
    \mathbf{V}_{R}=\mathcal{F}^{-1}(\Bar{\mathbf{V}}\mathbf{W}_C)\mathbf{W}_Z,
\end{equation}
where $\mathcal{F}^{-1}$ represents inverse Fourier transform ,$\mathbf{V}_{R}$ denotes the reconstructed \textit{value} matrix $\mathbf{V}$, $\mathbf{W}_Z$ represents the pre-training weights of the original self-attention mechanism. It is worth noting that the initialization of $\mathbf{W}_Z$ is set to $1+0i$ and is prohibited from updating during training to reduce the bias caused by modifying the pre-trained backbone.

\noindent\textbf{SIEFormer}
The proposed SIEFormer is composed of three main components, including two innovative spectral views based on the explicit use of the Fourier transform within MFL and the implicit use of the graph Fourier transform within BaF, as well as the standard residual connection between the input and output features (the Python style pseudo-code can be seen in Algorithm \ref{algorithm: SIEFormera}), which can be denoted as
\begin{equation}
    \label{Fa}
    \mathbf{Y}= \mathbf{X} + \mathbf{M}\mathbf{V}_{R}+ \Tilde{\mathbf{A}}^2\mathbf{V}\mathbf{W}_{P} + (\Tilde{\mathbf{A}}^2-\mathbf{I}_N)\mathbf{V}\mathbf{W}_{R}.
\end{equation}

\begin{table*}[!t]
\centering
\caption{Statistical comparison of data partitions (i.e., labeled and unlabeled) across different datasets.}
\label{tab1}
\vspace{-2ex}
\begin{tabular}{lccccccccc}
\toprule
\multicolumn{2}{c}{} & \multicolumn{1}{c}{CIFAR-10} & \multicolumn{1}{c}{CIFAR-100} & \multicolumn{1}{c}{ImageNet-100} & \multicolumn{1}{c}{CUB-200} & \multicolumn{1}{c}{Stanford-Cars} & \multicolumn{1}{c}{FGVC-Aircraft} & \multicolumn{1}{c}{Herbarium19} & \multicolumn{1}{c}{ImageNet-1k}\\ \cmidrule(l){1-10} 
\multirow{2}{*}{Labeled} & \multicolumn{1}{c}{$\vert \mathcal{Y}_{l}\vert$} & \multicolumn{1}{c}{5} & \multicolumn{1}{c}{80} & \multicolumn{1}{c}{50} & \multicolumn{1}{c}{100} & \multicolumn{1}{c}{98} & \multicolumn{1}{c}{50} & \multicolumn{1}{c}{341} & \multicolumn{1}{c}{500}\\
& \multicolumn{1}{c}{$\vert\mathcal{D}_l\vert$} & \multicolumn{1}{c}{12.5k} & \multicolumn{1}{c}{20k} & \multicolumn{1}{c}{31.9k} & \multicolumn{1}{c}{1.5k} & \multicolumn{1}{c}{2.0k} & \multicolumn{1}{c}{1.7k} & \multicolumn{1}{c}{8.9k} & \multicolumn{1}{c}{321k}\\ \cmidrule(l){1-10} 
\multirow{2}{*}{Unlabeled} & \multicolumn{1}{c}{$\vert \mathcal{Y}_u\vert$} & \multicolumn{1}{c}{10} & \multicolumn{1}{c}{100} & \multicolumn{1}{c}{100} & \multicolumn{1}{c}{200} & \multicolumn{1}{c}{196} & \multicolumn{1}{c}{100} & \multicolumn{1}{c}{683} & \multicolumn{1}{c}{1000}\\
& \multicolumn{1}{c}{$\vert\mathcal{D}_u\vert$} & \multicolumn{1}{c}{37.5k} & \multicolumn{1}{c}{30k} & \multicolumn{1}{c}{95.3k} & \multicolumn{1}{c}{4.5k} & \multicolumn{1}{c}{6.1k} & \multicolumn{1}{c}{5.0k} & \multicolumn{1}{c}{25.4k} & \multicolumn{1}{c}{960k}\\
\bottomrule
\end{tabular}
\vspace{-2ex}
\end{table*}

\begin{table*}[!t]
\centering
\caption{Results on three generic datasets. `-’ means not reported.}
\label{generic}
\begin{tabular}{lccccccccc}
\toprule
\multicolumn{1}{l}{} & \multicolumn{3}{c}{CIFAR-10} & \multicolumn{3}{c}{CIFAR-100} & \multicolumn{3}{c}{ImageNet-100} \\ \cmidrule(l){2-10} 
\multicolumn{1}{l}{Methods} & All & Old & \multicolumn{1}{c|}{New} & All & Old & \multicolumn{1}{c|}{New} & All & Old & New \\ \cmidrule(l){1-10} 
\multicolumn{1}{l}{RankStats+ \cite{autonovel1}} & 46.8 & 19.2 & \multicolumn{1}{c|}{60.5} & 58.2 & 77.6 & \multicolumn{1}{c|}{19.3} & 37.1 & 61.6 & 24.8 \\
\multicolumn{1}{l}{UNO+ \cite{UNO}} & 68.6 & \textbf{98.3} & \multicolumn{1}{c|}{53.8} & 69.5 & 80.6 & \multicolumn{1}{c|}{47.2} & 70.3 & 95.0 & 57.9 \\
\multicolumn{1}{l}{ORCA \cite{orca}} & 81.8 & 86.2 & \multicolumn{1}{c|}{79.6} & 69.0 & 77.4 & \multicolumn{1}{c|}{52.0} & 73.5 & 92.6 & 63.9 \\
\multicolumn{1}{l}{GCD \cite{GCD}} & 91.5 & 97.9 & \multicolumn{1}{c|}{88.2} & 73.0 & 76.2 & \multicolumn{1}{c|}{66.5} & 74.1 & 89.8 & 66.3 \\
\multicolumn{1}{l}{DCCL \cite{DCCL}} & 96.3 & 96.5 & \multicolumn{1}{c|}{96.9} & 75.3 & 76.8 & \multicolumn{1}{c|}{70.2} & 80.5 & 90.5 & 76.2 \\
\multicolumn{1}{l}{SimGCD \cite{wen2023simgcd}} & 97.1 & 95.1 & \multicolumn{1}{c|}{98.1} & 80.1 & 81.2 & \multicolumn{1}{c|}{77.8} & 83.0 & 93.1 & 77.9 \\
\multicolumn{1}{l}{PromptCAL \cite{promptcal}} & \textbf{97.9} & 96.6 & \multicolumn{1}{c|}{98.5} & 81.2 & 84.2 & \multicolumn{1}{c|}{75.3} & 83.1 & 92.7 & 78.3 \\
\multicolumn{1}{l}{CMS \cite{CMS}} & \textbf{-} & - & \multicolumn{1}{c|}{-} & 82.3 & \textbf{85.7} & \multicolumn{1}{c|}{75.5} & 84.7 & \textbf{95.6} & 79.2 \\
\multicolumn{1}{l}{SPTNet \cite{wang2024sptnet}} & 97.3 & 95.0 & \multicolumn{1}{c|}{\textbf{98.6}} & 81.4 & 84.3 & \multicolumn{1}{c|}{75.6} & 85.4 & 93.2 & 81.4 \\
\multicolumn{1}{l}{HypGCD \cite{hyper2025TNNLS}} & 96.6 & 95.5 & \multicolumn{1}{c|}{97.2} & 79.3 & 80.1 & \multicolumn{1}{c|}{77.8} & - & - & - \\
\multicolumn{1}{l}{ProtoGCD \cite{ma2025protogcd}} & 97.3 & 95.3 & \multicolumn{1}{c|}{98.2} & 81.9 & 82.9 & \multicolumn{1}{c|}{80.0} & 84.0 & 92.2 & 79.9 \\
\midrule
\multicolumn{1}{l}{\textbf{SIEFormer (GCD)}} & 97.3$^{\green{\text{+5.8}}}$ & 95.0$^{\red{\text{-2.9}}}$ & \multicolumn{1}{c|}{98.4$^{\green{\text{+10.2}}}$} & 77.1$^{\green{\text{+4.1}}}$ & 81.5$^{\green{\text{+5.3}}}$ & \multicolumn{1}{c|}{68.3$^{\green{\text{+1.8}}}$} & 79.9$^{\green{\text{+5.8}}}$ & 91.0$^{\green{\text{+1.2}}}$ &  74.3$^{\green{\text{+8.0}}}$\\
\multicolumn{1}{l}{\textbf{SIEFormer (SimGCD)}} & 97.4$^{\green{\text{+0.3}}}$ & 95.2$^{\green{\text{+0.1}}}$ & \multicolumn{1}{c|}{\textbf{98.6}$^{\green{\text{+0.5}}}$} & \textbf{83.3}$^{\green{\text{+3.2}}}$ & 84.4$^{\green{\text{+3.2}}}$ & \multicolumn{1}{c|}{\textbf{81.2}$^{\green{\text{+3.4}}}$} & \textbf{86.5}$^{\green{\text{+3.5}}}$ & 94.5$^{\green{\text{+1.4}}}$ & \textbf{82.6}$^{\green{\text{+4.7}}}$ \\
\multicolumn{1}{l}{\textbf{SIEFormer (SPTNet)}} & 97.4$^{\green{\text{+0.1}}}$ & 95.1$^{\green{\text{+0.1}}}$ & \multicolumn{1}{c|}{98.6$^{\green{\text{+0.0}}}$} & 82.6$^{\green{\text{+1.2}}}$ & 85.4$^{\green{\text{+1.1}}}$ & \multicolumn{1}{c|}{77.0$^{\green{\text{+1.4}}}$} & 86.3$^{\green{\text{+0.9}}}$ & 94.2$^{\green{\text{+1.0}}}$ &  82.3$^{\green{\text{+0.9}}}$\\
\bottomrule
\end{tabular}
\vspace{-2ex}
\end{table*}

\section{Experiments}\label{sec_4}

\subsection{Datasets and Evaluation Protocol}
The proposed SIEFormer was evaluated on several benchmark datasets for image recognition tasks. These include the CIFAR-10 and CIFAR-100 \cite{cifar} datasets, the ImageNet-100 dataset \cite{imagenet}), and the recently introduced Semantic Shift Benchmark datasets \cite{vaze2022openset} which consist of the CUB-200 \cite{cub}, Stanford-Cars \cite{scar}, and FGVC-Aircraft \cite{aircraft}). Additionally, the more challenging Herbarium 2019 fine-grained classification dataset \cite{herbarium-2019-fgvc6} was also used for evaluation.

\noindent\textbf{CIFAR-10 \& -100} \cite{cifar} are two datasets containing 10 and 100 categories respectively, covering a wide range of objects in our daily lives, such as cats, dogs, etc. CIFAR-10 contains 60,000 32x32 color images, while CIFAR-100 contains richer categories, with the same number of images and sizes.

\noindent\textbf{ImageNet-1K \& -100} \cite{imagenet} is an ImageNet subset dataset containing 1000 categories and 100 categories. Each category has hundreds to thousands of images. ImageNet is commonly used to evaluate the performance and generalization ability of models on image classification tasks.

\noindent\textbf{CUB-200} \cite{cub} is a widely-used fine-grained image classification dataset, consisting of 11,788 images of 200 different bird species. Each image is meticulously annotated with detailed attribute information, making it a challenging benchmark for evaluating the ability of models to recognize subtle visual differences between closely related classes. 

\noindent\textbf{Stanford Cars} \cite{scar} is a well-established fine-grained vehicle classification dataset. It contains 8,144 images spanning 196 different car models. The dataset captures the intricate visual details and stylistic variations that characterize different makes and models.

\noindent\textbf{Herbarium 2019} \cite{herbarium-2019-fgvc6} is a large-scale fine-grained image dataset for plant species identification. It consists of over 46,000 high-resolution images of herbarium specimens, representing 680 different species of the Melastomataceae plant family. 

\noindent\textbf{FGVC-Aircraft} \cite{aircraft} is a dataset contains approximately 10,000 images of 100 different aircraft models, designed to advance research in aircraft recognition and classification.

For the experimental evaluation, the datasets were partitioned into labeled and unlabeled subsets. Specifically, for the CIFAR-100 dataset, 80\% of the data was allocated to the labeled set $\mathcal{D}_l$, while for the remaining datasets, this ratio was set to 50\% (see Table \ref{tab1}). The labeled data was sampled from the annotated classes, and the unlabeled data $\mathcal{D}_u$ was formed by merging the remaining instances.

Following the evaluation protocol established by prior work (GCD), the cluster accuracy was assessed by assigning the predicted labels to the ground truth labels. The reported accuracy metrics include the performance on the labeled data, the unlabeled data, as well as the overall accuracy across all classes.

\subsection{Implementation Details}
In line with the GCD setting, the backbone employed in our experiments was ViT-B/16 \cite{ViT}, which had been pre-trained on the Imagenet dataset using the DINO self-supervised learning approach \cite{DINO}. While most prior works have only fine-tuned the last block of the ViT model, the proposed approach replaced some of the attention blocks with the designed SIEFormer architecture and fine-tuned the filter parameters. All experiments were conducted using an NVIDIA GeForce RTX 3090 GPU.

\noindent\textbf{Details on SIEFormer on GCD.} The learning rate was set to 0.0005 and the batch size was set to 128, and the other hyper-parameters remain unchanged.

\noindent\textbf{Details on SIEFormer on SimGCD.} The batch size was set to 128 and trained for 200 epochs. Following SimGCD, supervised contrastive learning weight $\lambda$ is set to 0.35, and the temperature values $\tau_{u}$, $\tau_c$  as 0.07, 1.0, respectively. For the classification objective, we also use the same hyper-parameters as that in original SimGCD, which set $\tau_s$ to 0.1, and $\tau_t$ was initialized to 0.07, then warmed up to 0.04 with a cosine schedule for the first 30 epochs.

\noindent\textbf{Details on SIEFormer on SPTNet.} The parameters used to train our SIEFormer (SPTNet) are identical to those in the original SPTNet \cite{wang2024sptnet}. More specifically, the spatial prompt size $m$ and the global prompt size are set to 1 and 30, respectively. Furthermore, the two stages alternate every k = 20 iterations. All prompts are trained for 1,000 epochs with a batch size of 128.

\begin{table*}[!t]
\centering
\caption{Results on four fine-grained datasets. `-' means not reported.}
\label{finegrained}
\resizebox{\textwidth}{!}{
\begin{tabular}{ccccccccccccc}
\toprule
\multicolumn{1}{l}{} & \multicolumn{3}{c}{CUB-200} & \multicolumn{3}{c}{Stanford-Cars} & \multicolumn{3}{c}{FGVC-Aircraft} & \multicolumn{3}{c}{Herbarium19} \\ \cmidrule(l){2-13} 
\multicolumn{1}{l}{Methods} & All & Old & \multicolumn{1}{c|}{New} & All & Old & \multicolumn{1}{c|}{New} & All & Old & \multicolumn{1}{c|}{New} & All & Old & New \\ \cmidrule(l){1-13} 
\multicolumn{1}{l}{RankStats+ \cite{autonovel1}} & 33.3 & 51.6 & \multicolumn{1}{c|}{24.2} & 28.3 & 61.8 & \multicolumn{1}{c|}{12.1} & 26.9 & 36.4 & \multicolumn{1}{c|}{22.2} & 27.9 & 55.8 & 12.8 \\
\multicolumn{1}{l}{UNO+ \cite{UNO}} & 35.1 & 49.0 & \multicolumn{1}{c|}{28.1} & 35.5 & 70.5 & \multicolumn{1}{c|}{18.6} & 40.3 & 56.4 & \multicolumn{1}{c|}{32.2} & 28.3 & 53.7 & 14.7 \\
\multicolumn{1}{l}{ORCA \cite{orca}} & 35.3 & 45.6 & \multicolumn{1}{c|}{30.2} & 23.5 & 50.1 & \multicolumn{1}{c|}{10.7} & 20.9 & 30.9 & \multicolumn{1}{c|}{11.5} & 22.0 & 31.8 & 17.1 \\
\multicolumn{1}{l}{GCD \cite{GCD}} & 51.3 & 56.6 & \multicolumn{1}{c|}{48.7} & 39.0 & 57.6 & \multicolumn{1}{c|}{29.9} & 45.0 & 41.1 & \multicolumn{1}{c|}{46.9} & 35.4 & 51.0 & 27.0 \\
\multicolumn{1}{l}{DCCL \cite{DCCL}} & 63.5 & 60.8 & \multicolumn{1}{c|}{64.9} & 43.1 & 55.7 & \multicolumn{1}{c|}{36.2} & \multicolumn{1}{c}{-} & \multicolumn{1}{c}{-} & \multicolumn{1}{c|}{-} & - & - & - \\
\multicolumn{1}{l}{SimGCD \cite{wen2023simgcd}} & 60.3 & 65.6 & \multicolumn{1}{c|}{57.7} & 53.8 & 71.9 & \multicolumn{1}{c|}{45.0} & \multicolumn{1}{c}{54.2} & \multicolumn{1}{c}{59.1} & \multicolumn{1}{c|}{51.8} & 43.0 & 58.0 & 35.1 \\
\multicolumn{1}{l}{PromptCAL \cite{promptcal}} & 62.9 & 64.4 & \multicolumn{1}{c|}{62.1} & 50.2 & 70.1 & \multicolumn{1}{c|}{40.6} & \multicolumn{1}{c}{52.2} & \multicolumn{1}{c}{52.2} & \multicolumn{1}{c|}{52.3} & 37.0 & 52.0 & 28.9 \\
\multicolumn{1}{l}{CMS \cite{CMS}} & 68.2 & \textbf{76.5} & \multicolumn{1}{c|}{64.0} & 56.9 & 76.1 & \multicolumn{1}{c|}{47.6} & \multicolumn{1}{c}{56.0} & \multicolumn{1}{c}{\textbf{63.4}} & \multicolumn{1}{c|}{52.3} & 36.4 & 54.9 & 26.4 \\
\multicolumn{1}{l}{$\mu$GCD \cite{vaze2023clevr4}} & 65.7 & 68.0 & \multicolumn{1}{c|}{64.6} & 56.5 & 68.1 & \multicolumn{1}{c|}{50.9} & \multicolumn{1}{c}{53.8} & \multicolumn{1}{c}{55.4} & \multicolumn{1}{c|}{53.0} & \textbf{45.8} & \textbf{61.9} & 37.2 \\
\multicolumn{1}{l}{SPTNet \cite{wang2024sptnet}} & 65.8 & 68.8 & \multicolumn{1}{c|}{65.1} & 59.0 & \textbf{79.2} & \multicolumn{1}{c|}{49.3} & \multicolumn{1}{c}{59.3} & \multicolumn{1}{c}{61.8} & \multicolumn{1}{c|}{58.1} & 43.4 & 58.7 & 35.2 \\
\multicolumn{1}{l}{HypGCD \cite{hyper2025TNNLS}} & 65.8 & 75.1 & \multicolumn{1}{c|}{61.1} & 57.6 & 74.9 & \multicolumn{1}{c|}{49.2} & \multicolumn{1}{c}{54.8} & \multicolumn{1}{c}{63.5} & \multicolumn{1}{c|}{50.6} & - & - & - \\
\multicolumn{1}{l}{ProtoGCD \cite{ma2025protogcd}} & 63.2 & 68.5 & \multicolumn{1}{c|}{60.5} & 53.8 & 73.7 & \multicolumn{1}{c|}{44.2} & \multicolumn{1}{c}{56.8} & \multicolumn{1}{c}{62.5} & \multicolumn{1}{c|}{\textbf{53.9}} & 44.5 & 59.4 & 36.5 \\\midrule

\multicolumn{1}{l}{\textbf{SIEFormer (GCD)}} & 54.6$^{\green{\text{+3.3}}}$ & 57.9$^{\green{\text{+1.3}}}$ & \multicolumn{1}{c|}{53.0$^{\green{\text{+4.3}}}$} & 44.3$^{\green{\text{+5.3}}}$ & 62.4$^{\green{\text{+5.2}}}$ & \multicolumn{1}{c|}{35.6$^{\green{\text{+5.7}}}$} & 47.4$^{\green{\text{+2.4}}}$ & 47.6$^{\green{\text{+6.5}}}$ & \multicolumn{1}{c|}{47.4$^{\green{\text{+0.5}}}$} & 36.5$^{\green{\text{+1.1}}}$ & 54.5$^{\green{\text{+3.5}}}$ & 26.8$^{\red{\text{-0.2}}}$ \\
\multicolumn{1}{l}{\textbf{SIEFormer (SimGCD)}} & 66.0$^{\green{\text{+5.7}}}$ & 71.6$^{\green{\text{+6.0}}}$ & \multicolumn{1}{c|}{63.2$^{\green{\text{+5.5}}}$} & 59.2$^{\green{\text{+5.4}}}$ & 74.0$^{\green{\text{+2.1}}}$ & \multicolumn{1}{c|}{\textbf{52.1$^{\green{\text{+7.1}}}$}} & 53.3$^{\red{\text{-0.9}}}$ & 60.2$^{\green{\text{+1.1}}}$ & \multicolumn{1}{c|}{49.8$^{\red{\text{-2.0}}}$} & 44.5$^{\green{\text{+1.5}}}$ & 57.2$^{\red{\text{-0.8}}}$ & 37.6$^{\green{\text{+2.5}}}$ \\
\multicolumn{1}{l}{\textbf{SIEFormer (SPTNet)}} & \textbf{69.1}$^{\green{\text{+3.3}}}$ & 75.1$^{\green{\text{+6.3}}}$ & \multicolumn{1}{c|}{\textbf{66.2}$^{\green{\text{+1.1}}}$} & \textbf{60.1}$^{\green{\text{+1.1}}}$ & 77.8$^{\red{\text{-1.4}}}$ & \multicolumn{1}{c|}{51.5$^{\green{\text{+2.2}}}$} & \textbf{59.4}$^{\green{\text{+0.1}}}$ & 62.5$^{\green{\text{+0.7}}}$ & \multicolumn{1}{c|}{57.8$^{\red{\text{-0.3}}}$} & 44.7$^{\green{\text{+1.3}}}$ & 57.7$^{\red{\text{-1.0}}}$ & \textbf{37.7}$^{\green{\text{+2.5}}}$\\
\bottomrule
\end{tabular}}
\vspace{-3ex}
\end{table*}

\begin{table}[!t]
\centering
\caption{Results on ImageNet-1K. '$^*$' indicates that we implemented it directly with the code provided due to the original article didn't report it.}
\label{tab_img1k}
\vspace{-2ex}
\begin{tabular}
{cccc}
\toprule
\multicolumn{1}{c}{} & \multicolumn{3}{c}{ImageNet-1K} \\ \cmidrule(l){2-4}

\multicolumn{1}{l}{Methods} & \multicolumn{1}{c}{All} & \multicolumn{1}{c}{Old} & \multicolumn{1}{c}{New} \\ \cmidrule(l){1-4}

\multicolumn{1}{l|}{GCD} & \multicolumn{1}{c}{52.5} & \multicolumn{1}{c}{72.5} & \multicolumn{1}{c}{42.2} \\

\multicolumn{1}{l|}{SimGCD} & \multicolumn{1}{c}{57.1} & \multicolumn{1}{c}{77.3} & \multicolumn{1}{c}{46.9}\\ 

\multicolumn{1}{l|}{SPTNet$^*$} & \multicolumn{1}{c}{56.9} & \multicolumn{1}{c}{76.1} & \multicolumn{1}{c}{47.1}\\

\multicolumn{1}{l|}{HypGCD} & \multicolumn{1}{c}{55.0} & \multicolumn{1}{c}{75.7} & \multicolumn{1}{c}{44.6}\\
\midrule

\multicolumn{1}{l|}{\textbf{SIEFormer (SimGCD)}} & \multicolumn{1}{c}{\textbf{58.1}} & \multicolumn{1}{c}{78.0} & \multicolumn{1}{c}{\textbf{48.2}} \\
\multicolumn{1}{l|}{\textbf{SIEFormer (SPTNet)}} & \multicolumn{1}{c}{57.9} & \multicolumn{1}{c}{\textbf{78.5}} & \multicolumn{1}{c}{47.5} \\
\bottomrule
\end{tabular}
\vspace{-3ex}
\end{table}

\subsection{Main Results}
The proposed SIEFormer was evaluated across multiple datasets, with comprehensive comparisons made against state-of-the-art (SOTA) baseline models, including Rankstats+ \cite{autonovel1}, UNO \cite{UNO}, ORCA \cite{orca}, GCD \cite{GCD}, DCCL \cite{DCCL}, SimGCD \cite{wen2023simgcd}, PromptCAL \cite{promptcal}, $\mu$GCD \cite{vaze2023clevr4}, CMS \cite{CMS}, SPTNet \cite{wang2024sptnet}, ProtoGCD \cite{ma2025protogcd} and HypGCD \cite{hyper2025TNNLS}. Our model consistently demonstrate superior performance, achieving the new SOTA results.

\subsubsection{Performance on Generic Datasets}
We present a comparative analysis of various methods on generic datasets in Table \ref{generic}, with the support of SIEFormer, both commonly used baselines have shown significant improvements. Specifically, SimGCD with SIEFormer surpasses SPTNet \cite{wang2024sptnet} by notable margins, exhibiting performance improvements of 1.9\% on CIFAR-100, and 1.1\% on ImageNet-100 in terms of All classes. Furthermore, on New classes, SIEFormer demonstrates remarkable gains, surpassing 5.6\%, and 1.2\% on CIFAR-100, and ImageNet-100, respectively. Meanwhile, SPTNet with SIEFormer surpasses SPTNet \cite{wang2024sptnet} by 1.2\% on CIFAR-100, and 0.9\% on ImageNet-100 in terms of All classes. Notably, our fine-tuned backbone network under the baseline setting yields superior results on multiple datasets. Among existing methods, UNO+ exhibits robust performance on Old classes, particularly on CIFAR-10, while PromptCAL excels on CIFAR-10. However, when it comes to New classes, SIEFormer demonstrates superior stability, displaying a reduced bias towards the Old class across all datasets. 

\subsubsection{Performance on Fine-grained Datasets}
We further extended the evaluation of the SIEFormer to several fine-grained image classification datasets, with the results presented in Table \ref{finegrained}. Existing contrastive learning methods often face challenges in accurately classifying fine-grained images, as they require the model to effectively distinguish subtle visual differences between similar classes by capturing local patterns and contextual dependencies. Our SIEFormer introduces an innovative approach that yields substantial performance improvements. In particular, when compared to SimGCD, SimGCD with SIEFormer surpasses SimGCD by 5.7\% on CUB-200, 5.4\% on Stanford-cars, and 1.5\% on Herbarium19 for All classes. Also, SPTNet with SIEFormer showcases a margin improvement, surpassing SPTNet by 3.3\% on CUB-200, 1.1\% on Stanford-cars, and 1.3\% on Herbarium19 for All classes. This demonstrates that SIEFormer can achieve remarkable improvements when applied to different methods on fine-grained datasets.

\subsubsection{Performance on ImageNet-1K}

To comprehensively evaluate our method, we also conducted experiments on ImageNet-1K, a large-scale and more challenging generic classification dataset, and summarized the results In Table \ref{tab_img1k}. The experimental results show that SIEFormer exhibits consistent improvements across all metrics. This indicates the superiority of our SIEFormer when faced with more complex and realistic scenarios.

\begin{table}[!t]
\centering
\caption{Effects of each component on the datasets of Stanford-Cars and ImageNet-100.}
\label{comp}
\vspace{-2ex}
\begin{tabular}
{ccccccccc}
\toprule
\multicolumn{1}{c}{} & \multicolumn{2}{c}{Component} & \multicolumn{3}{c}{Stanford-Cars} & \multicolumn{3}{c}{ImageNet-100} \\ \cmidrule(l){1-9}
\multicolumn{1}{c}{} & \multicolumn{1}{c}{MFL} & \multicolumn{1}{c|}{BaF} & \multicolumn{1}{c}{All} & \multicolumn{1}{c}{Old} & \multicolumn{1}{c|}{New} & \multicolumn{1}{c}{All} & \multicolumn{1}{c}{Old} & \multicolumn{1}{c}{New} \\ \cmidrule(l){1-9}

\multicolumn{1}{c}{(1)} & \multicolumn{1}{c}{\XSolidBrush} & \multicolumn{1}{c|}{\XSolidBrush} & \multicolumn{1}{c}{53.8} & \multicolumn{1}{c}{71.9} & \multicolumn{1}{c|}{45.0} & \multicolumn{1}{c}{83.0} & \multicolumn{1}{c}{93.1} & \multicolumn{1}{c}{77.9} \\

\multicolumn{1}{c}{(2)} & \multicolumn{1}{c}{\Checkmark} & \multicolumn{1}{c|}{\XSolidBrush} & \multicolumn{1}{c}{38.6} & \multicolumn{1}{c}{54.6} & \multicolumn{1}{c|}{34.1} & \multicolumn{1}{c}{78.5} & \multicolumn{1}{c}{83.2} & \multicolumn{1}{c}{71.2} \\

\multicolumn{1}{c}{(3)} & \multicolumn{1}{c}{\XSolidBrush} & \multicolumn{1}{c|}{\Checkmark} & \multicolumn{1}{c}{57.8} & \multicolumn{1}{c}{77.8} & \multicolumn{1}{c|}{48.1} & \multicolumn{1}{c}{85.8} & \multicolumn{1}{c}{94.6} & \multicolumn{1}{c}{81.3}\\

\multicolumn{1}{c}{(4)} & \multicolumn{1}{c}{\Checkmark} & \multicolumn{1}{c|}{\Checkmark} & \multicolumn{1}{c}{59.2} & \multicolumn{1}{c}{74.0} & \multicolumn{1}{c|}{52.1} & \multicolumn{1}{c}{86.5} & \multicolumn{1}{c}{94.5} & \multicolumn{1}{c}{82.6} \\
\bottomrule
\end{tabular}
\vspace{-3ex}
\end{table}

\subsection{Ablation and Analysis}
In order to assess the effectiveness of SIEFormer, we conducted a series of ablation experiments. These experiments were designed to examine the impact of key factors within our model on its overall performance. For simplicity, we primarily evaluated the performance of SIEFormer on SimGCD.

\begin{table*}[!t]
\centering
\caption{Effects of removing extra bias on the filter.}
\label{bias}
\vspace{-2ex}
\begin{tabular}{p{0.3cm}p{0.3cm}p{0.3cm}p{0.3cm}p{0.3cm}p{0.3cm}p{0.3cm}p{0.3cm}p{0.3cm}p{0.3cm}p{0.3cm}p{0.3cm}p{0.3cm}p{0.3cm}p{0.3cm}p{0.3cm}p{0.3cm}p{0.3cm}p{0.3cm}}
\toprule
\multicolumn{1}{l}{} & \multicolumn{3}{c}{CIFAR-10} & \multicolumn{3}{c}{CIFAR-100} & \multicolumn{3}{c}{ImageNet-100} & \multicolumn{3}{c}{CUB-200} & \multicolumn{3}{c}{Stanford-Cars} & \multicolumn{3}{c}{Herbarium19} \\ \cmidrule(l){2-19} 
\multicolumn{1}{l}{Bias} & All & Old & \multicolumn{1}{c|}{New} & All & Old & \multicolumn{1}{c|}{New} & All & Old & \multicolumn{1}{c|}{New} & All & Old & \multicolumn{1}{c|}{New} & All & Old & \multicolumn{1}{c|}{New} & All & Old & New \\ \cmidrule(l){1-19} 
\multicolumn{1}{c}{\Checkmark} & 97.3 & 94.9 & \multicolumn{1}{c|}{98.5} & 81.4 & 84.7 & \multicolumn{1}{c|}{75.0} & 83.1 & 88.9 & \multicolumn{1}{c|}{77.8} & 64.9 & 73.2 & \multicolumn{1}{c|}{60.7} & 57.7 & 73.9 & \multicolumn{1}{c|}{49.8} & 43.2 & 60.0 & 34.1\\
\multicolumn{1}{c}{\XSolidBrush} & 97.4 & 95.2 & \multicolumn{1}{c|}{98.6} & 83.3 & 84.4& \multicolumn{1}{c|}{81.2} & 86.5 & 94.5 & \multicolumn{1}{c|}{82.6} & 66.0 & 71.6 & \multicolumn{1}{c|}{63.2} & 59.2 & 74.0 & \multicolumn{1}{c|}{52.1} & 44.5 & 57.2 & 37.6 \\ 
\bottomrule
\end{tabular}
\vspace{-2ex}
\end{table*}

\begin{table}[!t]
\centering
\caption{Effects of different filters on model performance.}
\label{Filter}
\vspace{-2ex}
\resizebox{0.48\textwidth}{!}{
\begin{tabular}
{cccccccc}
\toprule
\multicolumn{1}{c}{} & \multicolumn{1}{c}{} & \multicolumn{3}{c}{Stanford-Cars} & \multicolumn{3}{c}{ImageNet-100} \\ \cmidrule(l){3-8}

\multicolumn{1}{c}{} & \multicolumn{1}{l}{Filter} & \multicolumn{1}{c}{All} & \multicolumn{1}{c}{Old} & \multicolumn{1}{c|}{New} & \multicolumn{1}{c}{All} & \multicolumn{1}{c}{Old} & \multicolumn{1}{c}{New} \\ \cmidrule(l){1-8}

\multicolumn{1}{c}{(1)} & \multicolumn{1}{l|}{All-Pass Filter} & \multicolumn{1}{c}{32.4} & \multicolumn{1}{c}{56.6} & \multicolumn{1}{c|}{20.7} & \multicolumn{1}{c}{66.8} & \multicolumn{1}{c}{86.7} & \multicolumn{1}{c}{48.8} \\

\multicolumn{1}{c}{(2)} & \multicolumn{1}{l|}{Cayley Filter \cite{cayley}} & \multicolumn{1}{c}{58.2} & \multicolumn{1}{c}{75.2} & \multicolumn{1}{c|}{49.4} & \multicolumn{1}{c}{83.6} & \multicolumn{1}{c}{95.3} & \multicolumn{1}{c}{80.7} \\

\multicolumn{1}{c}{(3)} & \multicolumn{1}{l|}{Chebyshev Filter \cite{zhu2021unifying}} & \multicolumn{1}{c}{57.9} & \multicolumn{1}{c}{74.3} & \multicolumn{1}{c|}{50.0} & \multicolumn{1}{c}{83.7} & \multicolumn{1}{c}{94.5} & \multicolumn{1}{c}{78.2} \\

\multicolumn{1}{c}{(4)} & \multicolumn{1}{l|}{ARMA Filter \cite{bianchi2021graph}} & \multicolumn{1}{c}{58.5} & \multicolumn{1}{c}{74.5} & \multicolumn{1}{c|}{50.8} & \multicolumn{1}{c}{85.1} & \multicolumn{1}{c}{95.1} & \multicolumn{1}{c}{80.0}\\

\multicolumn{1}{c}{(5)} & \multicolumn{1}{l|}{\textbf{Band-adaptive Filter(Ours)}} & \multicolumn{1}{c}{59.2} & \multicolumn{1}{c}{74.0} & \multicolumn{1}{c|}{52.1} & \multicolumn{1}{c}{86.5} & \multicolumn{1}{c}{94.5} & \multicolumn{1}{c}{82.6} \\
\bottomrule
\end{tabular}}
\vspace{-2ex}
\end{table}

\subsubsection{Components Analysis}
The proposed SIEFormer architecture comprises two key components, the Maneuverable Filtering Layer (MFL) and the Band-Adaptive Filter (BaF), which play pivotal roles in providing distinct spectral perspectives to the Transformer model. As shown in Table \ref{comp}, it can be seen that the combined use of the BaF and MFL yields the most significant performance gains. Specifically, employing only the BaF for fine-tuning the backbone network results in notable model performance improvements. This configuration achieves performance boosts of 5.4\%, 2.1\%, and 7.1\% over SimGCD \cite{wen2023simgcd} on Stanford-Cars for All, New, and Old classes, respectively. Additionally, it achieves enhancements of 3.5\%, 1.4\%, and 4.7\% on ImageNet-100, respectively. In addition, we observe a performance dip for All and Old classes on both datasets when introducing MFL as a standalone component.

\subsubsection{Importance of Spectral Domain Parameters}
To ascertain that the use of the proposed spectral domain parameters is the key attribute to the performance improvements, we conducted experiments wherein each filter parameter was equipped with a tunable bias. This design deviates from the spectral-domain Transformer filter fine-tuning, allowing us to assess the significance of spectral parameters. Specifically, the Band-Adaptive Filter function in Eq. \eqref{BaF} was adjusted as follows:
\begin{equation}
\label{BF}
\mathcal{S}_{BaF}(\mathbf{T}, \mathbf{V})= \mathbf{T}^2\mathbf{V}\mathbf{W}_P + (\mathbf{T}^2-I_N)\mathbf{V}\mathbf{W}_R + \mathbf{W}_B,
\end{equation}
where $\mathbf{W}_B$ represents the bias term of the filter. The results of these experiments are presented in Table \ref{bias}. Our findings indicate a notable decline in model performance when additional bias terms are introduced. In all datasets, the mean reductions for All classes, Old classes, and New classes were 1.6\%, 0.2\%, and 3.2\%, respectively. These results emphasize that the simple introduction of additional parameters for fine-tuning the DINO pre-trained Vision Transformer (ViT) as the backbone network does not yield favorable outcomes. In the context of a filter applied to the \textit{values} in the self-attention mechanism, the bias term does not inherently encode information about the \textit{values} and therefore can be considered as a perturbation of the filter, which undoubtedly leads to a loss of performance.

\subsubsection{Effects of Different Filters}
Recalling that the complete form of the $1^{st}$-order approximation formed by using the Chebyshev polynomial, as a truncated expansion, effectively reduces the heavy computational load involved in using the high-order polynomials and the overfitting problem \cite{isufi2016autoregressive, kipf2016semi}, however, the use of the $1^{st}$-order Chebyshev filters suffers from noticeable deficiencies. Concretely, the Chebyshev polynomial filters are generally less sensitive to sharp changes appearing on frequencies due to the natural smoothness of polynomials, and as the network structure deepens, it is very likely that the initial node features will be weakened and even lost by continuously performing Laplacian smoothing over the graph \cite{li2018deeper, bianchi2021graph}. In addition, the eigenvalue using Chebyshev filters needs to be mapped to the interval [-1, 1], resulting in less focus on coefficients representing those small frequencies \cite{levie2018cayleynets, zhu2021unifying}.

To assess the effectiveness of the proposed approach, we replaced the Band-adaptive Filter (BaF) with various types of filters. As shown in Table \ref{Filter}, the listed filters include the $1^{st}$-order Chebyshev filter, the $1^{st}$-order ARMA filter \cite{bianchi2021graph}, all-pass filter (i.e. only the unitary matrix is used rather than the Laplacian matrix and involving no additional parameters updates), and the $1^{st}$-order Cayley filter \cite{cayley}. The $1^{st}$-order ARMA filter is a filter that is designed in a rational way and is endowed with the ability to approximate the desired filter response (i.e., $g_{\theta}(\boldsymbol{\Lambda})$) by capturing various frequency responses, accounting for higher-order neighbourhoods and being localised in the node space \cite{isufi2016autoregressive, narang2013signal, tremblay2018design, bianchi2021graph}. Specifically, based on the frequency response defined for the $K^{th}$-order ARMA filter \cite{bianchi2021graph}, $g_{\theta}^{k}\left(\lambda\right)=\frac{\sum_{k=0}^{K-1}p_{k}\lambda^{k}}{1+\sum_{k=1}^{K}q_{k}\lambda^{k}}$, the ARMA filtering operation within the node space can be derived and denoted as
\begin{equation}\label{eq13}
        \hat{\mathbf{V}}=\left(\sum_{k=0}^{K-1}p_{k}\tilde{\mathbf{L}}^{k}\right)\left(\mathbf{I}+\sum_{k=1}^{K}q_{k}\tilde{\mathbf{L}}^{k}\right)^{-1} \mathbf{V},
\end{equation}
where it reverts to the polynomial filtering by simply setting $q_{k}$ to 0 ( the MA term of the operation). By incorporating an additional AR term \cite{zhou2003learning, zhu2003semi}, the ARMA filtering operation can have the advantages of these two types of filters and exhibit its benefits in adaptively modifying the frequency responses of graph signals. Nevertheless, the matrix inversion performed in the AR term of Eq. (\ref{eq13}) involves heavy computational resources, which hinders the implementation of ARMA filters. As suggested by \cite{bianchi2021graph}, the effect of the ARMA\textsuperscript{1} filter can be reached by iterating the $1^{st}$-order recursion until convergence, as follows
\begin{equation}\label{eq14}
    \hat{\mathbf{V}}^{(t+1)}=\alpha\mathbf{T}\hat{\mathbf{V}}^{(t)}+\beta\mathbf{V},
\end{equation}
where $\mathbf{T}=\frac{1}{2}\left(\lambda_{max}-\lambda_{min}\right)\mathbf{I}-\mathbf{L}$ with $\lambda_{max}$ and $\lambda_{min}$ represent the largest and smallest eigenvalues of $\mathbf{L}$, $\alpha$ and $\beta$ are the two filter coefficients that manipulate how the ARMA filter is used to adaptively approximate the desired filter response function. For instance, the frequency response of an ARMA\textsuperscript{1} filter can be expressed as
\begin{equation}\label{eq15}
    g_{1}(\boldsymbol{\Lambda})=\frac{\beta}{1-\alpha + \alpha\boldsymbol{\Lambda}},
\end{equation}
where $\mathbf{\Lambda}$ represents all eigenvalues of $\mathbf{L}$. Note that the above ARMA\textsuperscript{1} frequency response function is also equivalent to the unnormalized form of the Personalized PageRank \cite{klicpera2018predict} if $\alpha+\beta=1$. The analytical form of Eq. (\ref{eq13}) can also be resumed by aggregating the generalized $K^{th}$-order frequency responses of the above ARMA filters and is formally expressed as
\begin{equation}\label{eq16}
    \hat{\mathbf{V}}=\mathbf{U}\sum_{k=1}^{K}g_{k}\left(\boldsymbol{\Lambda}\right) \mathbf{U}^{\operatorname{T}} \mathbf{V}.
\end{equation}

\begin{figure*}[htbp]
	\centering
    \subfigure[CUB]{\includegraphics[width=.31\textwidth]{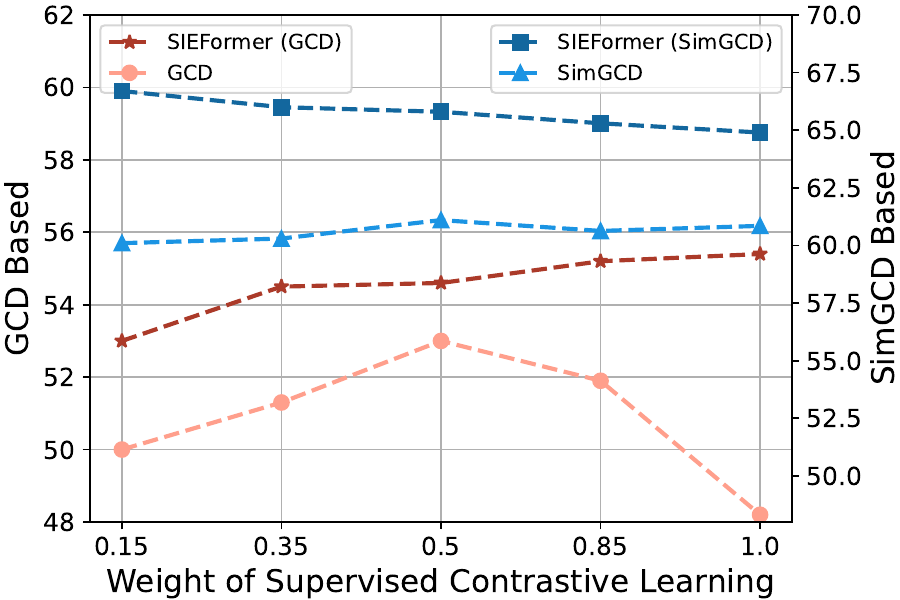}}\quad
    \subfigure[CIFAR-100]{\includegraphics[width=.31\textwidth]{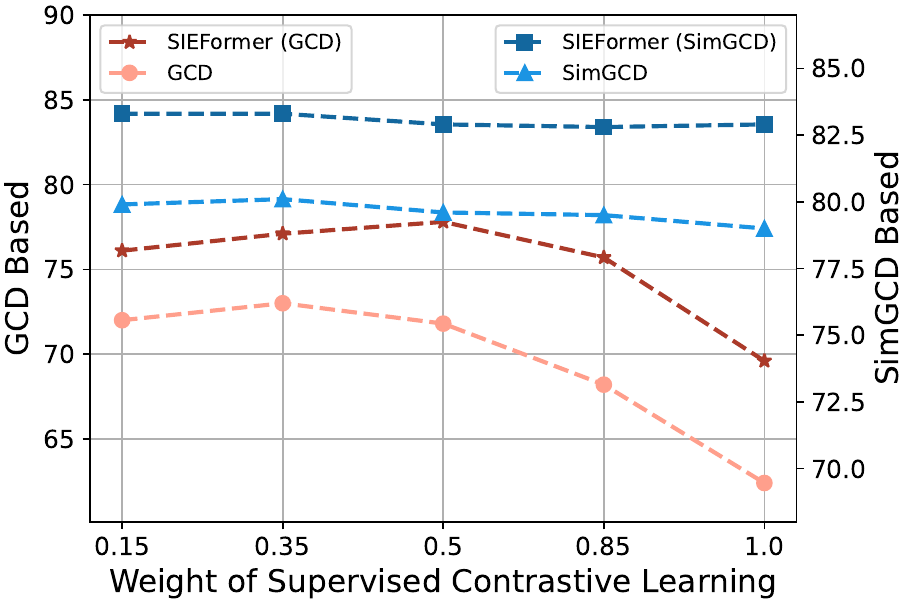}}\quad
    \subfigure[Stanford-Cars]{\includegraphics[width=.31\textwidth]{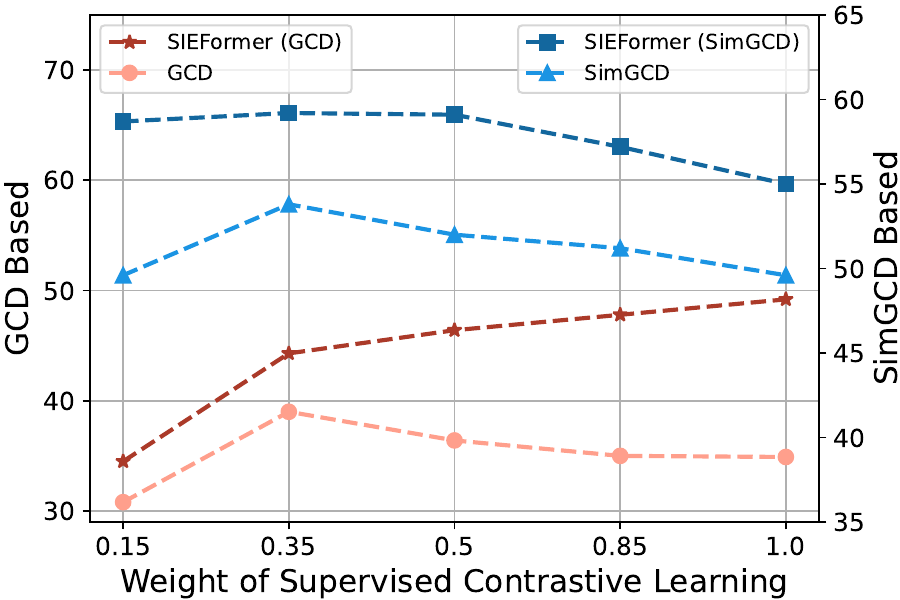}}
    \vspace{-1ex}
	\caption{Performance (\%) with different weight of supervised contrastive learning.}
	\label{fig_weigh_con}\vspace{-2ex} 
\end{figure*}

\begin{figure}[!t]
	\centering
    \subfigure{\includegraphics[width=.48\textwidth]{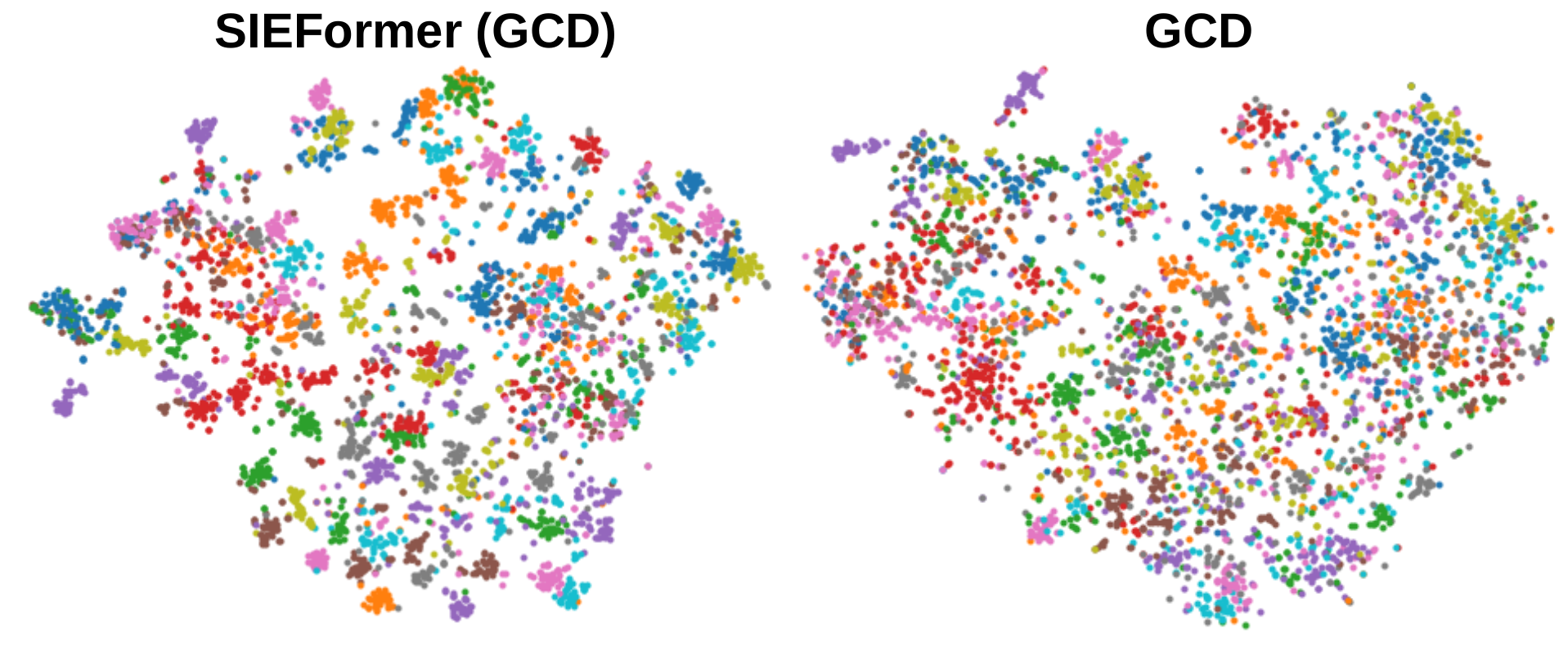}}
    \vspace{-2ex}
	\caption{The t-SNE visualization from new categories on Stanford-Cars with only supervised contrastive learning.}
	\label{fig_tsne1}
 \vspace{-2ex}
\end{figure}

The optimal parameters of $\alpha$ and $\beta$ in the filter response function Eq. \eqref{eq15} can be acquired by the linear regression but usually need prior knowledge \cite{isufi2016autoregressive}. When these two parameters are generalized to a multi-channel form as needed and rewritten as $\mathbf{W}_{\alpha}$ and $\mathbf{W}_ {\beta}$ (i.e., the recursion of Eq. (\ref{eq14}) will be changed to $\mathbf{V}^{(t+1)}=\mathbf{T}\mathbf{V}^{(t)}\mathbf{W}_{\alpha}+\mathbf{V}\mathbf{W}_{\beta}$ accordingly), deep learning can be leveraged to automatically optimize them in an end-to-end manner, similar to the mechanism of how to optimize a task-specific objective function \cite{bianchi2021graph}. Since the importance of the frequency response decreases as its order increases, the presented ARMA filter-based transformer is also generalized similarly to the complete form of the $1^{st}$-order Chebyshev approximation in Eq. (\ref{1cheb}) and expressed as  
\begin{equation}\label{eq17}
    \begin{split}
            \mathcal{S}_{\operatorname{ARMA}}(\mathbf{T}, \mathbf{V}) =\mathbf{V}\mathbf{W}_{\tau} + \left(\mathbf{T}\mathbf{V}\mathbf{W}_{\tau}\mathbf{W}_{\alpha}+\mathbf{V}\mathbf{W}_{\beta}\right),\\ 
    \end{split}
\end{equation}
where $\mathbf{W}_{\alpha}$, $\mathbf{W}_{\beta}$ and $\mathbf{W}_{\tau}$ are parameter matrices of shape $\mathbb{R}^{C \times C}$. 

We also evaluated the performance of using the Cayley filters \cite{cayley} which are known for their ability to produce smooth filtering operations while being localized in the node space. Concretely, CayleyNet \cite{cayley} introduces an additional spectral zoom parameter $h$, which is inclusive to Cayley filter \cite{zhu2021unifying} select most interested frequency bands. Formally, the $1^{st}$-order Cayley filter is defined as
\begin{equation}
    \label{cayley}
    \mathcal{S}_{\operatorname{Cayley}}(\mathbf{T}, \mathbf{V})= \mathbf{VW}_1+2(\mathbf{L}^2\mathbf{V}\mathbf{W}_{h^2}+\mathbf{V})\mathbf{W}_2.
\end{equation}

The comparative results of these filters are shown in Table \ref{Filter}. It is obvious that methods resembling low-pass filters, such as Chebyshev and ARMA filters, are relatively less effective when compared to our Band-adaptive Filter, particularly on New classes and All classes. Notably, the All-Pass filter does not introduce any additional parameters to bias, meaning that the backbone network output in the initialization phase, yields suboptimal performance and is not well-suited to the current scenario.

\begin{table}[!t]
\centering
\caption{Performance of SIEFormer, SimGCD, $\mu$GCD and SPTNet with DINOv2 on CUB, Stanford-Cars and ImageNet-100. `-’ means not reported.}
\label{dinov2}
\vspace{-2ex}
\resizebox{0.49\textwidth}{!}{
\begin{tabular}
{ccccccccccc}
\toprule
\multicolumn{1}{c}{} & \multicolumn{1}{c}{} & \multicolumn{3}{c}{CUB-200} & \multicolumn{3}{c}{Stanford-Cars} & \multicolumn{3}{c}{ImageNet-100} \\ \cmidrule(l){3-11}

\multicolumn{1}{l}{Method} & \multicolumn{1}{c}{Pre-training} & \multicolumn{1}{c}{All} & \multicolumn{1}{c}{Old} & \multicolumn{1}{c|}{New} & \multicolumn{1}{c}{All} & \multicolumn{1}{c}{Old} & \multicolumn{1}{c|}{New} & \multicolumn{1}{c}{All} & \multicolumn{1}{c}{Old} & \multicolumn{1}{c}{New}  \\ \cmidrule(l){1-11}

\multicolumn{1}{l}{GCD} & \multicolumn{1}{c|}{DINO} & \multicolumn{1}{c}{53.1} & \multicolumn{1}{c}{46.6} & \multicolumn{1}{c|}{48.7} & \multicolumn{1}{c}{39.0} & \multicolumn{1}{c}{57.6} & \multicolumn{1}{c|}{29.9} & \multicolumn{1}{c}{74.1} & \multicolumn{1}{c}{89.8} & \multicolumn{1}{c}{66.3}\\

\multicolumn{1}{l}{SimGCD} & \multicolumn{1}{c|}{DINO} & \multicolumn{1}{c}{60.3} & \multicolumn{1}{c}{65.6} & \multicolumn{1}{c|}{57.7} & \multicolumn{1}{c}{53.8} & \multicolumn{1}{c}{71.9} & \multicolumn{1}{c|}{45.0}  & \multicolumn{1}{c}{83.0} & \multicolumn{1}{c}{93.1} & \multicolumn{1}{c}{77.9}\\

\multicolumn{1}{l}{$\mu$GCD} & \multicolumn{1}{c|}{DINO} & \multicolumn{1}{c}{65.7} & \multicolumn{1}{c}{68.0} & \multicolumn{1}{c|}{64.6} & \multicolumn{1}{c}{56.5} & \multicolumn{1}{c}{68.1} & \multicolumn{1}{c|}{50.9} & \multicolumn{1}{c}{-} & \multicolumn{1}{c}{-} & \multicolumn{1}{c}{-}\\

\multicolumn{1}{l}{SPTNet} & \multicolumn{1}{c|}{DINO} & \multicolumn{1}{c}{65.8} & \multicolumn{1}{c}{68.8} & \multicolumn{1}{c|}{65.1} & \multicolumn{1}{c}{59.0} & \multicolumn{1}{c}{79.2} & \multicolumn{1}{c|}{49.3} & \multicolumn{1}{c}{85.5} & \multicolumn{1}{c}{93.9} & \multicolumn{1}{c}{81.2} \\

\multicolumn{1}{l}{Ours} & \multicolumn{1}{c|}{DINO} & \multicolumn{1}{c}{66.0} & \multicolumn{1}{c}{71.6} & \multicolumn{1}{c|}{63.2} & \multicolumn{1}{c}{59.2} & \multicolumn{1}{c}{74.0} & \multicolumn{1}{c|}{52.1} & \multicolumn{1}{c}{86.5} & \multicolumn{1}{c}{94.5} & \multicolumn{1}{c}{82.6} \\ \midrule

\multicolumn{1}{l}{GCD} & \multicolumn{1}{c|}{DINOv2} & \multicolumn{1}{c}{71.9} & \multicolumn{1}{c}{71.2} & \multicolumn{1}{c|}{72.3} & \multicolumn{1}{c}{65.7} & \multicolumn{1}{c}{67.8} & \multicolumn{1}{c|}{64.7} & \multicolumn{1}{c}{-} & \multicolumn{1}{c}{-} & \multicolumn{1}{c}{-} \\

\multicolumn{1}{l}{SimGCD} & \multicolumn{1}{c|}{DINOv2} & \multicolumn{1}{c}{71.5} & \multicolumn{1}{c}{78.1} & \multicolumn{1}{c|}{68.3} & \multicolumn{1}{c}{71.5} & \multicolumn{1}{c}{81.9} & \multicolumn{1}{c|}{66.6} & \multicolumn{1}{c}{88.5} & \multicolumn{1}{c}{86.2} & \multicolumn{1}{c}{84.6} \\

\multicolumn{1}{l}{$\mu$GCD} & \multicolumn{1}{c|}{DINOv2} & \multicolumn{1}{c}{74.0} & \multicolumn{1}{c}{75.9} & \multicolumn{1}{c|}{\textbf{73.1}} & \multicolumn{1}{c}{76.1} & \multicolumn{1}{c}{\textbf{91.0}} & \multicolumn{1}{c|}{68.9} & \multicolumn{1}{c}{-} & \multicolumn{1}{c}{-} & \multicolumn{1}{c}{-} \\

\multicolumn{1}{l}{SPTNet} & \multicolumn{1}{c|}{DINOv2} & \multicolumn{1}{c}{76.3} & \multicolumn{1}{c}{79.5} & \multicolumn{1}{c|}{74.6} & \multicolumn{1}{c}{-} & \multicolumn{1}{c}{-} & \multicolumn{1}{c|}{-} & \multicolumn{1}{c}{90.1} & \multicolumn{1}{c}{\textbf{96.1}} & \multicolumn{1}{c}{87.1} \\

\multicolumn{1}{l}{Ours} & \multicolumn{1}{c|}{DINOv2} & \multicolumn{1}{c}{\textbf{77.9}} & \multicolumn{1}{c}{\textbf{79.7}} & \multicolumn{1}{c|}{77.0} & \multicolumn{1}{c}{\textbf{77.2}} & \multicolumn{1}{c}{85.8} & \multicolumn{1}{c|}{\textbf{73.1}} & \multicolumn{1}{c}{\textbf{90.3}} & \multicolumn{1}{c}{96.0} & \multicolumn{1}{c}{\textbf{87.5}} \\
\bottomrule
\end{tabular}}
\vspace{-5ex}
\end{table}

\subsubsection{Effects of Supervised Contrastive Learning.}
We conducted experiments on three datasets, including CIFAR-100, CUB, and Stanford-Cars, to evaluate the impact of varying weights between supervised contrastive learning and self-supervised contrastive learning on overall performance. As illustrated in Fig. \ref{fig_weigh_con}, increasing the weight enhances the performance of GCD-based SIEFormer on fine-grained datasets, while the coarse-grained dataset CIFAR-100 achieves the best performance with a weight of 0.5. Due to the presence of the mean-entropy maximization regularizer and knowledge distillation in SimGCD, which effectively leverage information from unlabeled data, it is less sensitive to the weighting between supervised and self-supervised contrastive learning. Notably, when using only supervised contrastive learning, SIEFormer (GCD) attains the highest performance on fine-grained datasets. This suggests that the model can effectively recognize new categories even without any information about new classes. We also performed t-SNE visualizations of samples from new categories with the supervised contrastive learning weight \cite{GCD} set to 1 (see Fig. \ref{fig_tsne1}). The results demonstrate that our method excels in extracting both general and class-specific features.

\begin{figure*}[!t]
\centering
\includegraphics[width=0.98\textwidth]{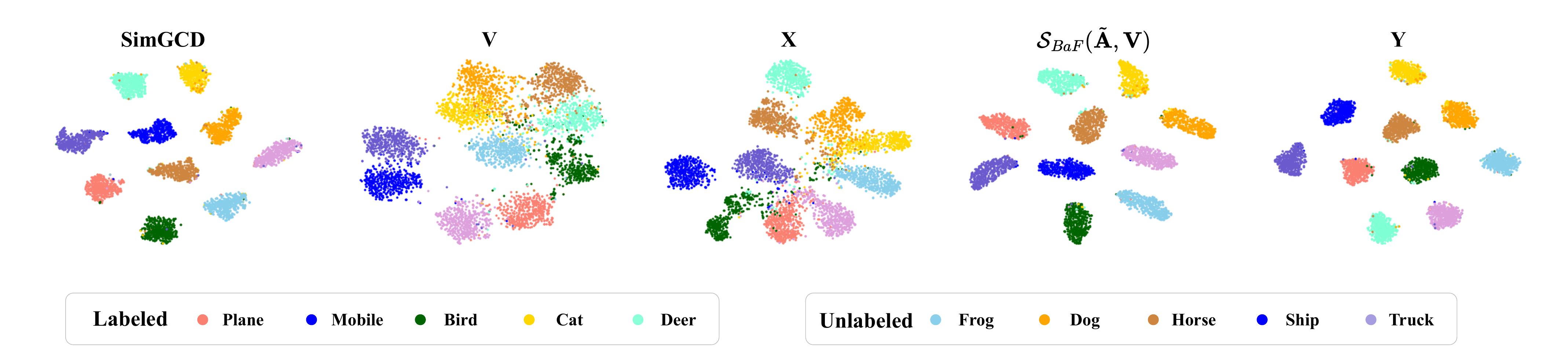}%
\vspace{-2ex}
\caption{The t-SNE visualization on CIFAR-10, includes Embeddings from SimGCD, \textit{values} $\mathbf{V}$ from the attention mechanism of last block in SIEFormer tuned ViT backbone network, output of attention mechanism from last block $\mathbf{X}$, output of the filter of last block $\mathcal{S}_{BaF}(\mathbf{T}, \mathbf{V})$, and fused features $\mathbf{Y}$ from last block.}
\label{tsne}
\vspace{-4ex}
\end{figure*}

\subsubsection{Effects of Different Pre-training Methods}
The recently proposed DINOv2 has demonstrated capabilities surpassing DINO. In order to harness the full potential of SIEFormer, we also explore the impact of fine-tuning SIEFormer with different backbone network pre-training parameters. Specifically, we investigate the substitution of the DINO \cite{DINO} pre-trained ViT-B/16 with the DINOv2 \cite{oquab2024dinov} pre-trained ViT-B/14. We conducted experiments on three datasets, CUB-200, Stanford-Cars and Imagenet-100, with the results presented in Table \ref{dinov2}. It can be found that the integration of higher-performing initialization parameters leads to significant performance improvements in SIEFormer. On Stanford-Cars, this improvement is particularly striking, with an average improvement of 15.8\% observed for both New classes and Old classes. While the enhancement on the ImageNet-100 is more modest, it nonetheless underscores the importance of pre-training parameters in shaping model performance. Since training parameters act as a decisive role in category discovery, and considering SIEFormer's heightened reliance on pre-training parameters during fine-tuning, the model's performance is directly tied to the quality of the employed pre-training parameters.

\subsubsection{Unknown Category Number}
In this paper, we directly use the actual number of categories (GT) for classification. Since the actual number of classes often differs from the estimated number of classes (Est.), we used an off-the-shelf method proposed by GCD to evaluate performance on fine-grained data, CUB-200 and Stanford-Cars, and the generic dataset ImageNet-100. As shown in Table \ref{est}, using an estimated number of classes leads to a slight performance improvement on Stanford-Cars, while performance deteriorates on CUB-200 and ImageNet-100.
However, using the estimated number, SimGCD with SIEFormer has achieved leading results across multiple dimensions.

\begin{table}[!t]
\centering
\caption{Performance and efficiency of different methods on CUB-200.}
\vspace{-2ex}
\label{tab_Efficiency}
\resizebox{0.48\textwidth}{!}{
\begin{tabular}
{ccccccc}
\toprule
\multicolumn{1}{c}{} & \multirow{2}{*}{\shortstack{Memory\\(GiB)}} & \multirow{2}{*}{Training time} & \multirow{2}{*}{Inference time} & \multicolumn{3}{c}{CUB-200} \\ \cmidrule(l){5-7}

\multicolumn{1}{l}{Methods} & & & & \multicolumn{1}{c}{All} & \multicolumn{1}{c}{Old} & \multicolumn{1}{c}{New} \\ \cmidrule(l){1-7}

\multicolumn{1}{l|}{GCD (ViT)} & \multicolumn{1}{c}{5.75} & \multicolumn{1}{c}{2.9h} & 9m & \multicolumn{1}{c}{51.3} & \multicolumn{1}{c}{56.6} & \multicolumn{1}{c}{48.7} \\

\multicolumn{1}{l|}{SimGCD (ViT)} & \multicolumn{1}{c}{5.90} & \multicolumn{1}{c}{3.1h} & 18s & \multicolumn{1}{c}{60.3} & \multicolumn{1}{c}{65.6} & \multicolumn{1}{c}{57.7}\\ 

\multicolumn{1}{l|}{SPTNet (ViT)} & \multicolumn{1}{c}{21.23} & \multicolumn{1}{c}{28.8h} & 18s & \multicolumn{1}{c}{65.8} & \multicolumn{1}{c}{68.8} & \multicolumn{1}{c}{65.1}\\ \midrule

\multicolumn{1}{l|}{\textbf{SimGCD (SIEFormer)}} & \multicolumn{1}{c}{8.09} & \multicolumn{1}{c}{3.5h} & 11m & \multicolumn{1}{c}{66.0} & \multicolumn{1}{c}{71.6} & \multicolumn{1}{c}{63.2}\\

\multicolumn{1}{l|}{\textbf{SPTNet (SIEFormer)}} & \multicolumn{1}{c}{23.80} & \multicolumn{1}{c}{31.5h} & 21s & \multicolumn{1}{c}{\textbf{69.1}} & \multicolumn{1}{c}{\textbf{75.1}} & \multicolumn{1}{c}{\textbf{66.2}}\\

\bottomrule
\end{tabular}}
\vspace{-4ex}
\end{table}

\subsubsection{Performance and Efficiency}
To better demonstrate the advantages of SIEFormer, we compared the performance and efficiency of various methods on the CUB-200 dataset. All models were evaluated on a single NVIDIA GeForce RTX 3090 GPU, and the results are summarized in Table \ref{tab_Efficiency}. To ensure a fair comparison, no inference was performed during the training phase. Experimental results show that, compared to the state-of-the-art model SPTNet (ViT), SimGCD (SIEFormer) significantly reduces both computation time and memory usage while achieving superior performance. In addition, integrating SIEFormer into SPTNet yields a 3.3\% performance improvement with only a slight increase in computational overhead.

\begin{table}[!t]
\centering
\caption{Performance of SIEFormer, SimGCD and SPTNet with an estimated number of categories on CUB, Stanford-Cars and ImageNet-100. `-’ means not reported.}
\label{est}
\resizebox{0.49\textwidth}{!}{
\begin{tabular}
{ccccccccccc}
\toprule
\multicolumn{1}{c}{} & \multicolumn{1}{c}{} & \multicolumn{3}{c}{CUB-200} & \multicolumn{3}{c}{Stanford-Cars} & \multicolumn{3}{c}{ImageNet-100} \\ \cmidrule(l){3-11}

\multicolumn{1}{l}{Method} & \multicolumn{1}{c}{$\vert C\vert$} & \multicolumn{1}{c}{All} & \multicolumn{1}{c}{Old} & \multicolumn{1}{c|}{New} & \multicolumn{1}{c}{All} & \multicolumn{1}{c}{Old} & \multicolumn{1}{c|}{New} & \multicolumn{1}{c}{All} & \multicolumn{1}{c}{Old} & \multicolumn{1}{c}{New}  \\ \cmidrule(l){1-11}

\multicolumn{1}{l}{GCD} & \multicolumn{1}{c|}{GT} & \multicolumn{1}{c}{51.3} & \multicolumn{1}{c}{56.6} & \multicolumn{1}{c|}{48.7} & \multicolumn{1}{c}{39.0} & \multicolumn{1}{c}{57.6} & \multicolumn{1}{c|}{29.9} & \multicolumn{1}{c}{74.1} & \multicolumn{1}{c}{89.8} & \multicolumn{1}{c}{66.3} \\

\multicolumn{1}{l}{SimGCD} & \multicolumn{1}{c|}{GT} & \multicolumn{1}{c}{60.3} & \multicolumn{1}{c}{65.6} & \multicolumn{1}{c|}{57.7} & \multicolumn{1}{c}{53.8} & \multicolumn{1}{c}{71.9} & \multicolumn{1}{c|}{45.0}  & \multicolumn{1}{c}{83.0} & \multicolumn{1}{c}{93.1} & \multicolumn{1}{c}{77.9}\\

\multicolumn{1}{l}{$\mu$GCD} & \multicolumn{1}{c|}{GT} & \multicolumn{1}{c}{65.7} & \multicolumn{1}{c}{68.0} & \multicolumn{1}{c|}{64.6} & \multicolumn{1}{c}{56.5} & \multicolumn{1}{c}{68.1} & \multicolumn{1}{c|}{50.9}  & \multicolumn{1}{c}{-} & \multicolumn{1}{c}{-} & \multicolumn{1}{c}{-}\\

\multicolumn{1}{l}{SPTNet} & \multicolumn{1}{c|}{GT} & \multicolumn{1}{c}{65.8} & \multicolumn{1}{c}{68.8} & \multicolumn{1}{c|}{\textbf{65.1}} & \multicolumn{1}{c}{59.0} & \multicolumn{1}{c}{\textbf{79.2}} & \multicolumn{1}{c|}{49.3} & \multicolumn{1}{c}{85.5} & \multicolumn{1}{c}{93.9} & \multicolumn{1}{c}{81.2} \\

\multicolumn{1}{l}{Ours} & \multicolumn{1}{c|}{GT} & \multicolumn{1}{c}{\textbf{66.0}} & \multicolumn{1}{c}{71.6} & \multicolumn{1}{c|}{63.2} & \multicolumn{1}{c}{59.2} & \multicolumn{1}{c}{74.0} & \multicolumn{1}{c|}{\textbf{52.1}} & \multicolumn{1}{c}{\textbf{86.5}} & \multicolumn{1}{c}{94.5} & \multicolumn{1}{c}{\textbf{82.6}} \\ \midrule

\multicolumn{1}{l}{GCD} & \multicolumn{1}{c|}{w/ Est.} & \multicolumn{1}{c}{47.1} & \multicolumn{1}{c}{55.1} & \multicolumn{1}{c|}{44.8} & \multicolumn{1}{c}{35.0} & \multicolumn{1}{c}{56.0} & \multicolumn{1}{c|}{24.8} & \multicolumn{1}{c}{72.7} & \multicolumn{1}{c}{91.8} & \multicolumn{1}{c}{63.8} \\

\multicolumn{1}{l}{SimGCD} & \multicolumn{1}{c|}{w/ Est.} & \multicolumn{1}{c}{61.5} & \multicolumn{1}{c}{66.4} & \multicolumn{1}{c|}{59.1} & \multicolumn{1}{c}{49.1} & \multicolumn{1}{c}{65.1} & \multicolumn{1}{c|}{41.3} & \multicolumn{1}{c}{81.7} & \multicolumn{1}{c}{91.2} & \multicolumn{1}{c}{76.8} \\

\multicolumn{1}{l}{$\mu$GCD} & \multicolumn{1}{c|}{w/ Est.} & \multicolumn{1}{c}{62.0} & \multicolumn{1}{c}{60.3} & \multicolumn{1}{c|}{62.8} & \multicolumn{1}{c}{56.3} & \multicolumn{1}{c}{66.8} & \multicolumn{1}{c|}{51.1} & \multicolumn{1}{c}{-} & \multicolumn{1}{c}{-} & \multicolumn{1}{c}{-} \\

\multicolumn{1}{l}{SPTNet} & \multicolumn{1}{c|}{w/ Est.} & \multicolumn{1}{c}{65.2} & \multicolumn{1}{c}{71.0} & \multicolumn{1}{c|}{62.3} & \multicolumn{1}{c}{-} & \multicolumn{1}{c}{-} & \multicolumn{1}{c|}{-} & \multicolumn{1}{c}{83.4} & \multicolumn{1}{c}{91.8} & \multicolumn{1}{c}{74.6} \\

\multicolumn{1}{l}{Ours} & \multicolumn{1}{c|}{w/ Est.} & \multicolumn{1}{c}{65.6} & \multicolumn{1}{c}{\textbf{73.5}} & \multicolumn{1}{c|}{61.7} & \multicolumn{1}{c}{\textbf{60.1}} & \multicolumn{1}{c}{78.4} & \multicolumn{1}{c|}{51.2} & \multicolumn{1}{c}{83.4} & \multicolumn{1}{c}{\textbf{94.6}} & \multicolumn{1}{c}{77.7} \\
\bottomrule
\end{tabular}}
\vspace{-1ex}
\end{table}

\begin{table}[!t]
\centering
\caption{Performance of SIEFormer, AdaptFormer, VPT and CLIP-Adapter on CUB, Stanford-Cars and CIFAR-100.}
\label{tab_zeroshot}
\vspace{-2ex}
\resizebox{0.49\textwidth}{!}{
\begin{tabular}
{cccccccccc}
\toprule
\multicolumn{1}{c}{} & \multicolumn{3}{c}{CUB-200} & \multicolumn{3}{c}{Stanford-Cars} & \multicolumn{3}{c}{CIFAR-100} \\ \cmidrule(l){2-10}

\multicolumn{1}{l}{Method} & \multicolumn{1}{c}{All} & \multicolumn{1}{c}{Old} & \multicolumn{1}{c|}{New} & \multicolumn{1}{c}{All} & \multicolumn{1}{c}{Old} & \multicolumn{1}{c|}{New} & \multicolumn{1}{c}{All} & \multicolumn{1}{c}{Old} & \multicolumn{1}{c}{New}  \\ \cmidrule(l){1-10}

\multicolumn{1}{l}{Baseline} & \multicolumn{1}{c}{60.3} & \multicolumn{1}{c}{65.6} & \multicolumn{1}{c|}{57.7} & \multicolumn{1}{c}{53.8} & \multicolumn{1}{c}{71.9} & \multicolumn{1}{c|}{45.0}  & \multicolumn{1}{c}{83.0} & \multicolumn{1}{c}{93.1} & \multicolumn{1}{c}{77.9}\\

\multicolumn{1}{l}{AdaptFormer} & \multicolumn{1}{c}{52.4} & \multicolumn{1}{c}{64.3} & \multicolumn{1}{c|}{46.5} & \multicolumn{1}{c}{32.6} & \multicolumn{1}{c}{57.5} & \multicolumn{1}{c|}{46.4}  & \multicolumn{1}{c}{81.9} & \multicolumn{1}{c}{80.8} & \multicolumn{1}{c}{87.0}\\

\multicolumn{1}{l}{VPT} & \multicolumn{1}{c}{43.5} & \multicolumn{1}{c}{64.7} & \multicolumn{1}{c|}{32.9} & \multicolumn{1}{c}{30.9} & \multicolumn{1}{c}{57.7} & \multicolumn{1}{c|}{17.9} & \multicolumn{1}{c}{80.7} & \multicolumn{1}{c}{81.1} & \multicolumn{1}{c}{79.9}\\

\multicolumn{1}{l}{CLIP-Adapter} & \multicolumn{1}{c}{50.2} & \multicolumn{1}{c}{66.4} & \multicolumn{1}{c|}{44.8} & \multicolumn{1}{c}{32.5} & \multicolumn{1}{c}{51.6} & \multicolumn{1}{c|}{23.3} & \multicolumn{1}{c}{78.5} & \multicolumn{1}{c}{82.3} & \multicolumn{1}{c}{70.8} \\

\multicolumn{1}{l}{Ours} & \multicolumn{1}{c}{66.0} & \multicolumn{1}{c}{71.6} & \multicolumn{1}{c|}{63.2} & \multicolumn{1}{c}{59.2} & \multicolumn{1}{c}{74.0} & \multicolumn{1}{c|}{52.1} & \multicolumn{1}{c}{86.5} & \multicolumn{1}{c}{94.5} & \multicolumn{1}{c}{82.6} \\
\bottomrule
\end{tabular}}
\vspace{-3ex}
\end{table}

\subsubsection{Comparison with Other ViTs}
To further validate the effectiveness of SIEFormer, we compare it with different fine-tuning paradigms. Specifically, we evaluated the performance of our proposed SIEFormer, integrated into SimGCD, against several state-of-the-art methods applicable to zero-shot learning, including VPT \cite{jia2022vpt}, CLIP-Adapter \cite{gao2024clipadapter}, and AdaptFormer \cite{chen2022adaptformer}, all of which do not leverage language models for zero-shot capabilities. As shown in the Table \ref{tab_zeroshot}, SIEFormer consistently outperforms the listed baselines, demonstrating the common ViT-based improvement strategies do not effectively address the GCD problem, which aligns with the findings of PromptCal \cite{promptcal}.

\subsection{Visualizations}
We use different visualization methods to reveal how SIEFormer achieves superior classification capabilities compared to other approaches, including t-SNE visualization, attention maps, and confusion matrices.

\subsubsection{Visualization with t-SNE}
To evaluate the effect of filters on signal processing, we conducted t-SNE visualizations \cite{van2008visualizing} using the CIFAR-10 dataset, as depicted in Fig. \ref{tsne}. We compared features extracted from three sources: Embeddings from SimGCD, \textit{values} $\mathbf{V}$ from the attention mechanism in SIEFormer tuned ViT backbone network, the output of attention mechanism from $\mathbf{X}$, the output of the filter of $\mathcal{S}_{BaF}(\mathbf{T}, \mathbf{V})$, and fused features $\mathbf{Y}$. Concretely, the features generated by SIEFormer exhibit superior separation, resulting in improved clustering and reduced inner-category dispersion compared to the features trained by \cite{wen2023simgcd}. Interestingly, the two sets of features complement each other, contributing to producing a more compact cluster space. 

\begin{figure}[!t]
\centering
\includegraphics[width=0.9\columnwidth]{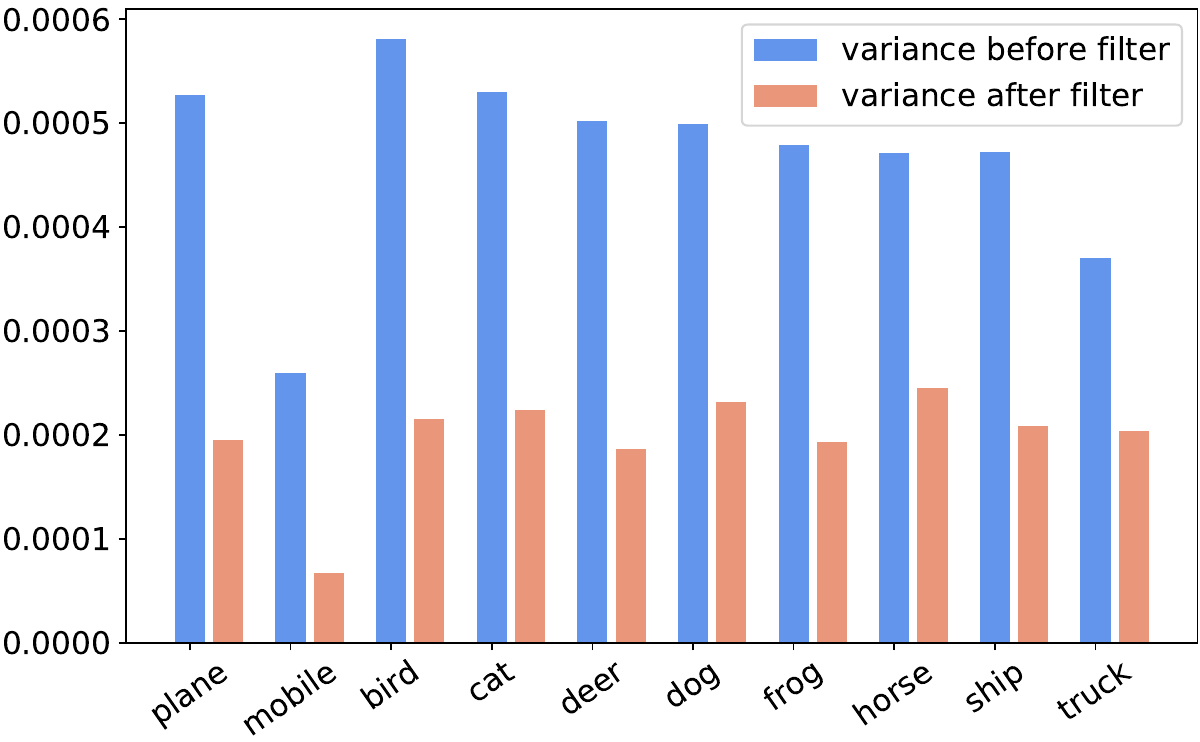}%
\vspace{-2ex}
\caption{Variances of \textit{values} before and after filter on CIFAR-10.}
\label{var}
\vspace{-2ex}
\end{figure}

To enhance the clarity of the class boundaries and compensate for the limitation of the shared axis scale, we supplemented the statistics with the measurements of feature variances from ten different classes where the features derived from the \textit{values} within the attention mechanism in the last block and the reconstructed \textit{values} generated by feeding the \textit{values} into the filters. The result is displayed in Fig. \ref{var} and statistical analysis revealed that the within-class variance was reduced by about 60\% while this reduction in within-class variance did not decrease overall variance. However, the diminished within-class variance holds significant implications for subsequent classification and clustering work as it contributes to enhanced discriminative capabilities and improved clustering performance.

\begin{figure}[!t]
\centering
\includegraphics[width=0.48\textwidth]{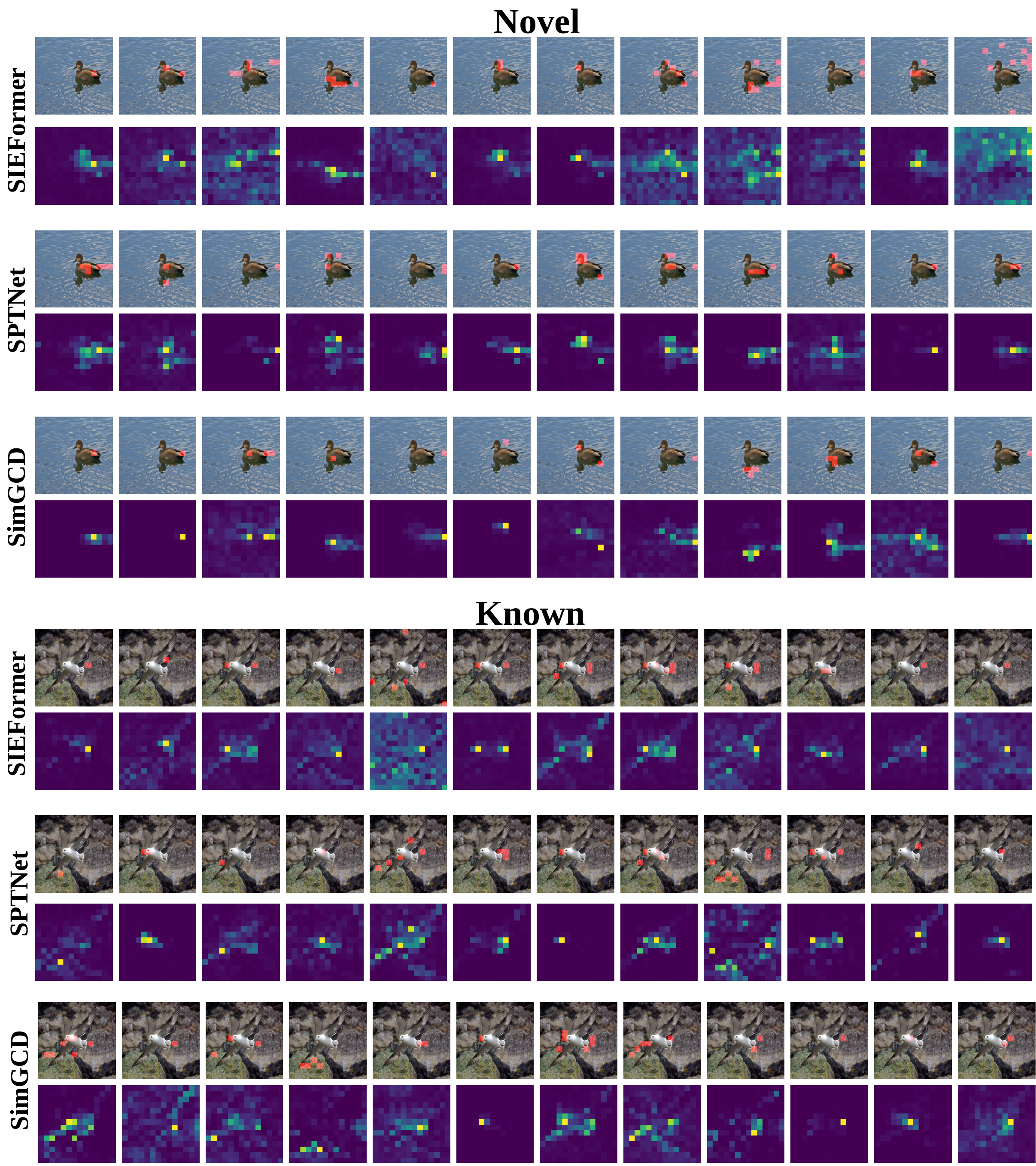}
\vspace{-2ex}
\caption{Visualization of multi-head attention on CUB-200 datasets.}
\vspace{-3ex}
\label{fig_attn1}
\end{figure}

\subsubsection{Visualization of Attention Maps}
We examined the attention maps derived from [CLS] tokens in DINO to shed light on the source of SIEFormer's superior performance. These multi-attention maps were compared with those from SimGCD \cite{wen2023simgcd} and SPTNet \cite{wang2024sptnet}. The experiments were carried out on both the Stanford-Cars and CUB-200 datasets, and the results are presented in Fig. \ref{fig_attn1} and Fig. \ref{fig_attn2}, respectively.

A comprehensive examination of the heatmaps led us to the following key observations: (1) For the CUB-200 dataset, GCD appears to prioritize holistic information about birds. In contrast, SIEFormer, leveraging the flexibility afforded by its design and the initial weights of ViT, appears to focus on certain parts of the bird (e.g., the bird's beak and feather texture) which are considered to be key to distinguishing suitable differences in fine-grained images. (2) For the Stanford-Cars dataset, SIEFormer tends to seek out model-specific features such as car brand logos and the overall structure of the vehicle, rather than broad features that other models focus on. (3) For different attention heads, some focus more on foreground regions while others focus more on background regions. In both cases, thanks to the special structure of SIEFormer, it does not lead to an excessive focus on local information; instead, it maintains a good global perspective.

\begin{figure}[!t]
\centering
\includegraphics[width=0.48\textwidth]{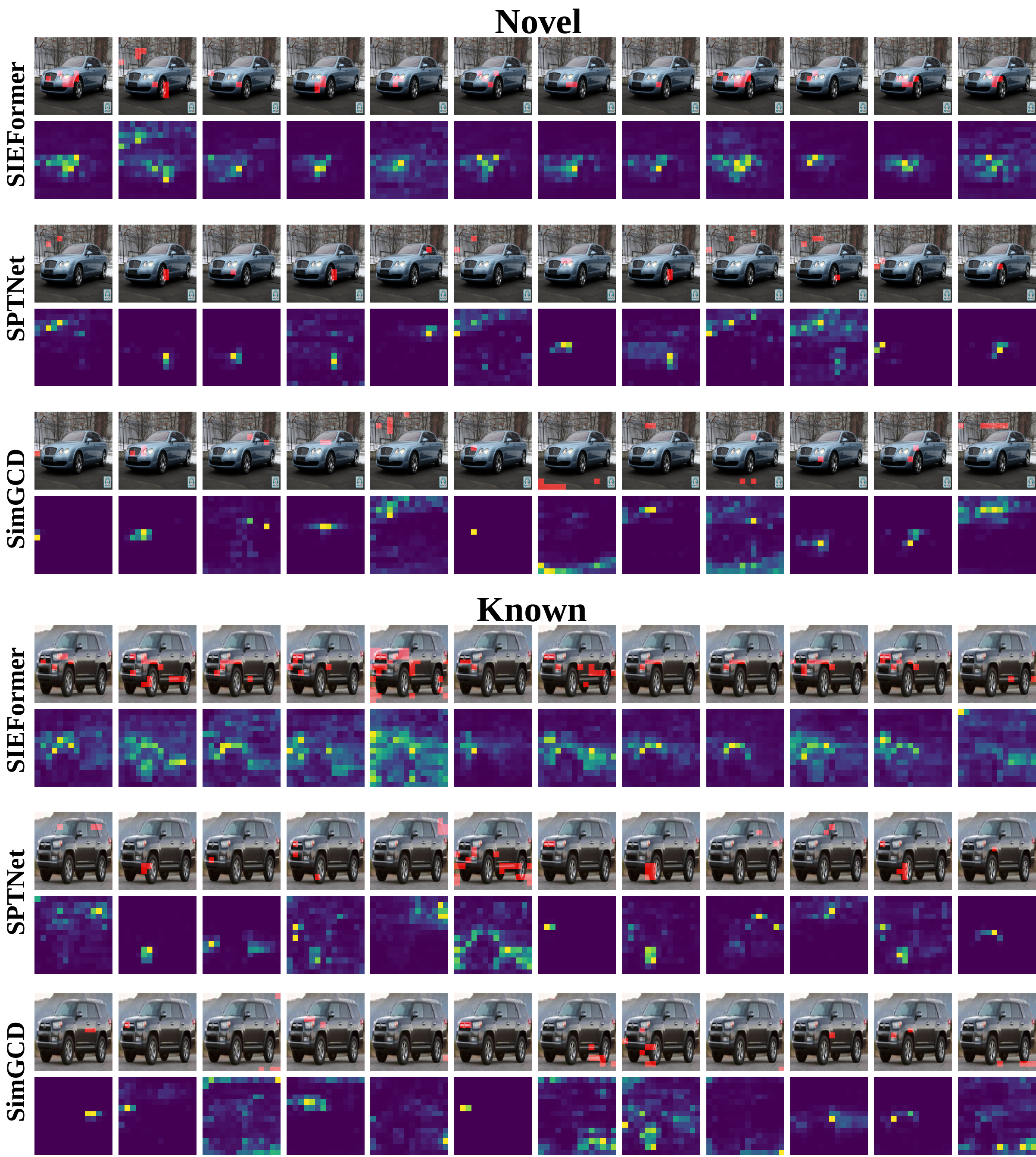}
\vspace{-2ex}
\caption{Visualization of attention on Stanford-Cars datasets.}
\label{fig_attn2}
\vspace{-3ex}
\end{figure}

\section{Conclusion and Discussion}\label{sec_5}
In this paper, we propose an innovative Vision Transformer architecture SIEFormer that is spectrally interpretable and tailored to address the unique challenges of GCD, where dedicated architectural designs are still lacking. We begin with a theoretical analysis of conventional ViT, revealing that the symmetrized and normalized affinity matrix in the attention mechanism naturally lends itself to spectral interpretation. Building on this, we introduce a dual-branch spectral ViT framework. The first branch incorporates a Band-Adaptive Filter (BaF), which emphasizes class-discriminative information by suppressing high-frequency components in the value features, with its parameters fine-tuned during training. The second branch employs a Maneuverable Filtering Layer (MFL) to extract spectral-aware features directly from the value representations, independent of the graph structure, allowing it to autonomously filter noise and mitigate the model’s bias toward known (Old) classes. By combining these two branches, SIEFormer enables a more balanced and effective optimization process. 

Our proposed method seamlessly integrates into any ViT-based architecture, achieving state-of-the-art performance across multiple public benchmark datasets with minimal compromise on model complexity and computational overhead. Future work could focus on enhancing inference efficiency, potentially through techniques such as token slimming or filter refinement. Additionally, exploring the applicability of SIEFormer to other open-world learning tasks, such as incremental learning or out-of-distribution detection, presents an exciting avenue for further research.
\section*{Acknowledgements}
This work was partly supported by the National Natural Science Foundation of China (NSFC) under Grant Nos. 62371235 and 62072246,  partly by the Key Research and Development Plan of Jiangsu Province (Industry Foresight and Key Core Technology Project) under Grant BE2023008-2, and partly by the ``111 Program" under Grant B13022.

\bibliographystyle{IEEEtran}
\bibliography{IEEE}

\begin{IEEEbiography}[{\includegraphics[width=1in,height=1.25in,clip,keepaspectratio]{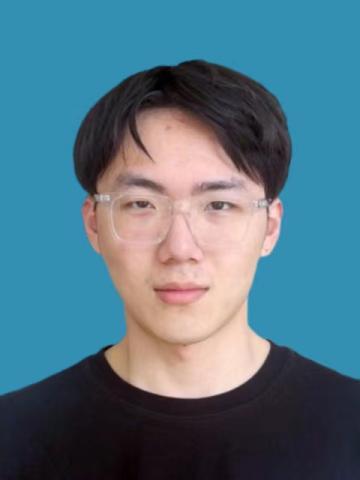}}]{Chunming Li} received the B.S. degree in Mathematics from Nanjing University of Science and Technology, Nanjing, China, in 2023, and is currently working toward the PhD degree in the School of Computer Science and Engineering, Nanjing University of Science and Technology, Nanjing, China. His current research interests include Novel Category Discovery and Few-shot Learning.
\end{IEEEbiography}

\vspace{-5ex}

\begin{IEEEbiography}[{\includegraphics[width=1in,height=1.25in,clip,keepaspectratio]{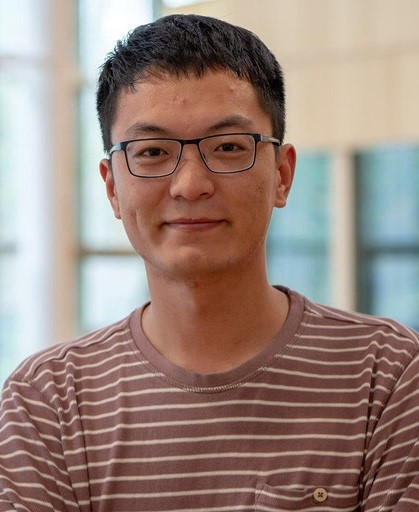}}]{Shidong Wang} is a Lecturer in Data Engineering \& AI, School of Computing, Newcastle University, UK. He received his PhD degree from the School of Computing Sciences, University of East Anglia (UEA), UK, in 2021. His research spans a breadth of domains including computer vision, deep learning, remote sensing, and environmental science, and he publishes in top-tier journals and conferences such as IEEE TPAMI, TIP, IJCV, ISPRS, AAAI and ACM MM. \end{IEEEbiography}

\vspace{-5ex}

\begin{IEEEbiography}[{\includegraphics[width=1in,height=1.25in,clip,keepaspectratio]{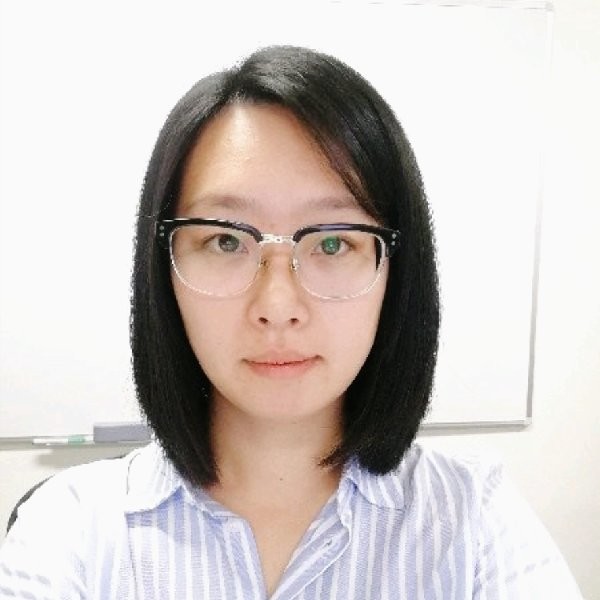}}]{Tong Xin} is a Lecturer in the School of Computing, Newcastle University. She obtained her PhD degree from the same department in 2020. Her interdisciplinary research focus on computer graphics, computer vision, medical image computing and data science. 
\end{IEEEbiography}

\vspace{-5ex}
\begin{IEEEbiography}[{\includegraphics[width=1in,height=1.25in,clip,keepaspectratio]{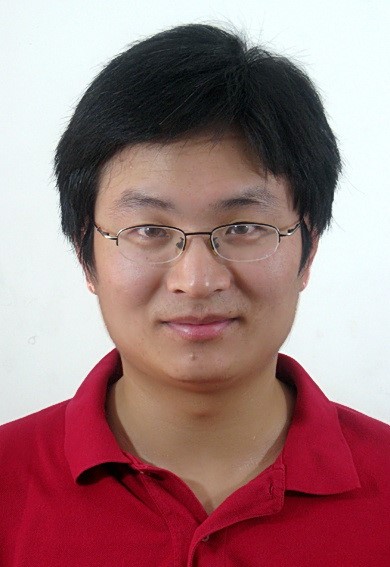}}]{Haofeng Zhang} is currently a Professor with the School of Computer Science and Engineering, Nanjing University of Science and Technology, China. He received the B.Eng. degree and the Ph.D. degree in 2003 and 2007 respectively from School of Computer Science and Technology, Nanjing University of Science and Technology, Nanjing, China. From Dec. 2016 to Dec. 2017, He was an academic visitor at University of East Anglia, Norwich, UK. His research interests include computer vision and mobile robot.
\end{IEEEbiography}

\vfill

\end{document}